\documentclass{article}

\usepackage[main, final]{neurips_2026}

\usepackage[utf8]{inputenc} 
\usepackage[T1]{fontenc}    
\usepackage{hyperref}       
\usepackage{url}            
\usepackage{booktabs}       
\usepackage{amsfonts}       
\usepackage{nicefrac}       
\usepackage{microtype}      
\usepackage{xcolor}         

\usepackage{amsmath} 
\hypersetup{
colorlinks=true
}

\usepackage{amsmath, amssymb, amsthm, mathtools}
\usepackage{graphicx}
\usepackage{subcaption}
\usepackage{booktabs}
\usepackage{multirow}
\usepackage{array}
\usepackage{colortbl}
\usepackage{enumitem}

\providecolor{TableHeader}{RGB}{235,235,235}
\providecolor{TableGroup}{RGB}{245,245,245}
\providecolor{SecondRow}{RGB}{232,238,247}
\providecolor{BestRow}{RGB}{226,236,228}
\providecommand{\tabrowcolor}[1]{\rowcolor{#1}}
\providecommand{\theadc}[1]{\shortstack[c]{#1}}

\usepackage{hyperref}
\usepackage{cleveref}
\usepackage{algorithm}
\usepackage{algpseudocode}
\usepackage{tcolorbox}
\usepackage{tikz}
\usepackage{pgfplots}
\pgfplotsset{compat=1.18}
\usetikzlibrary{patterns}

\theoremstyle{plain}
\newtheorem{theorem}{Theorem}
\newtheorem{proposition}{Proposition}
\newtheorem{lemma}{Lemma}

\theoremstyle{definition}
\newtheorem{assumption}{Assumption}

\newtcolorbox{takeaway}{
  colback=blue!5,
  colframe=blue!50!black,
  boxrule=0.5pt,
  arc=2pt,
  left=4pt, right=4pt, top=2pt, bottom=2pt,
}

\title{RECAP: Looking Once Is Not Enough \\for Vision-Language Reasoning}

\author{%
  \textbf{Zhaolu Kang}$^{1,2}$, 
  \\\textbf{Tailong Luo}, 
  \textbf{Chenxin Li}$^{2}$, 
  \textbf{Yingjie He}$^{2}$,
  \textbf{Zhenyu Yu},
  \textbf{Fengyu Zhou},
  \textbf{Jiachen Qian},\\
  \textbf{Lei Wei}$^{2}$,
  \textbf{Shuang Chen},
  \textbf{Jiachen Li},
  \textbf{Eric Hanchen Jiang},
  \textbf{Rongchao Zhang}$^{2}$,\\
\textbf{Zhengtao Yao},
  \textbf{Hoi Leong Lee},
  \textbf{Guansu Wang},
  \textbf{Kaiyue Zhou}$^{2,\dagger}$ \\
  $^{1}$Peking University \quad $^{2}$Chengdu Minto Tech \\
  \texttt{zlkang25@stu.pku.edu.cn}\quad 
  \texttt{kyzhou@wayne.edu}
}

\begin{document}

\maketitle

\renewcommand{\thefootnote}{\fnsymbol{footnote}}
\footnotetext{$^{\dagger}$Corresponding Author.}
\renewcommand{\thefootnote}{\arabic{footnote}}


\begin{abstract}
Long chain-of-thought reasoning improves vision-language models (VLMs), but it also exposes a temporal grounding failure: as generation unfolds, models may drift from image evidence while reinforcement learning still provides only a final-answer reward. We study this gap through visual revisit, a temporally localized reactivation of image evidence during reasoning. Across three VLM backbones, revisit peaks are predictive of correctness and causally linked to performance: masking them degrades accuracy more than masking non-peaks, while injecting revisit-like peaks into failed rollouts partially restores correct answers. Control analyses show that this signal is not explained by raw attention strength, uncertainty, logit margin, or hidden-state norm. We propose RECAP, a GRPO-compatible training method that turns detached revisit traces into a credit-assignment signal. RECAP uses revisit-conditioned advantage estimation, dynamic discounting, and gated value prediction to propagate final rewards through visually important reasoning steps, without extra rewards, additional rollouts, or inference-time changes. Across Qwen2.5-VL-7B, InternVL3-8B, and LLaVA-OV-7B, RECAP consistently improves over GRPO and VPPO, yielding $+6.1$--$+6.5$ HallusionBench gains over the SFT base while preserving reasoning and text-only performance. Results on larger LoRA-tuned models, Chinese zero-shot benchmarks, visual perturbations, and human evaluation further demonstrate robust improvements in image-grounded reasoning.
\end{abstract}

%

\section{Introduction}
\label{sec:intro}

Vision-language models (VLMs) increasingly solve visual tasks through
long chains of thought: they parse diagrams, compare spatial evidence,
and decompose multi-hop questions into intermediate reasoning steps
that can span hundreds of tokens~\cite{vlm-cot-survey, mathvista}.
Reinforcement learning with verifiable answers, especially GRPO-style
post-training~\cite{grpo} and multimodal variants~\cite{vppo, vapo, tvc},
has become a practical way to improve these abilities. Yet the reward
used by such training usually arrives only at the final answer. It can
say whether the trajectory ended correctly, but not which intermediate
tokens should have returned to the image and which could safely rely on
language-internal reasoning.

This creates a temporal grounding problem. A long-CoT VLM should not
attend uniformly to the image at every step: some tokens perform local
algebra, bookkeeping, or linguistic continuation. The failure is more
specific. When a later decision again depends on visual evidence, the
model must re-ground the reasoning chain in the image; otherwise, it can
produce fluent but visually unsupported conclusions~\cite{hallusionbench}.
Existing remedies improve visual grounding in useful but mostly static
ways: VPPO reweights updates with visual signals~\cite{vppo}, TVC
re-injects visual attention at inference time~\cite{tvc}, and VAPO uses
learned visual anchors~\cite{vapo}. These approaches do not directly ask
when image evidence should re-enter a long reasoning trajectory, nor how
a final-answer reward should be propagated through those visually
important moments.

We study this missing temporal structure through \emph{visual revisit}:
a local reactivation of cross-modal attention after a stretch of
language-internal reasoning. Across three VLM backbones, correct
rollouts exhibit recurring revisit peaks, whereas failed rollouts more
often lose or mistime these reactivations (\cref{fig:teaser}a). The
pattern is not merely correlational. Masking cross-modal attention at
revisit peaks hurts HallusionBench accuracy substantially more than
masking random non-peaks, while boosting would-be revisit positions in
failed rollouts partially recovers accuracy (\cref{fig:teaser}b,
\cref{sec:exp-causal}). Control experiments further show that revisit is
not explained by raw attention mass, token entropy, logit margin, or
hidden-state norm. Finally, \cref{fig:teaser}c foreshadows a mechanistic
localization result: the revisit effect is concentrated in a small set of
attention heads rather than spread uniformly across the network.
Motivated by this evidence, we propose RECAP, a GRPO-compatible training
method that turns detached revisit traces into a credit-assignment
signal. After a rollout is sampled, RECAP threads the trace through a
training-time estimator input, a revisit-conditioned dynamic discount,
and a revisit-gated value predictor. The sampling policy, terminal
reward, rollout budget, and inference-time computation remain unchanged.

\begin{figure}[t]
  \centering
  \begin{subfigure}[t]{0.285\linewidth}
    \centering
        \begin{tikzpicture}[baseline=(current bounding box.north)]
      \begin{axis}[
        scale only axis,
        width=0.90\linewidth,
    height=2.20cm,
        xtick={0,100,200,300},
        ytick={0.02,0.05,0.08},
        yticklabels={2,5,8},
        scaled y ticks=false,
        xlabel={generation step $t$},
        ylabel={CMA$_t$},
        tick label style={font=\scriptsize},
        label style={font=\scriptsize},
                title={(a) cross-modal attention},
        title style={font=\scriptsize, yshift=0.2ex},
        legend to name=teaserlegend,
legend columns=2,
legend style={
  draw=none,
  fill=none,
  font=\tiny,
  inner sep=1pt,
  row sep=-1pt,
  /tikz/every even column/.append style={column sep=6pt},
},
        legend cell align=left,
        axis lines=left,
        clip=false,
      ]
        \addplot[blue!65!black, thick, smooth, mark=none]
          coordinates {
            (0,0.072) (10,0.058) (20,0.042) (30,0.036) (40,0.033)
            (50,0.048) (55,0.068) (60,0.075) (65,0.062) (70,0.045)
            (80,0.034) (90,0.030) (100,0.028)
            (108,0.046) (115,0.070) (122,0.066) (130,0.040)
            (140,0.030) (150,0.028) (158,0.041) (165,0.064)
            (172,0.070) (180,0.050) (190,0.031) (200,0.027)
            (210,0.029) (220,0.044) (230,0.067) (238,0.072)
            (245,0.055) (255,0.034) (265,0.029) (275,0.027)
            (283,0.042) (290,0.060) (300,0.066)
          };
        \addlegendentry{correct}
        \addplot[red!70!black, thick, smooth, mark=none]
          coordinates {
            (0,0.068) (10,0.055) (20,0.041) (30,0.036) (40,0.028)
            (50,0.031) (60,0.024) (70,0.026) (80,0.020) (90,0.023)
            (100,0.019) (110,0.022) (120,0.020) (130,0.026)
            (140,0.019) (150,0.017) (160,0.021) (170,0.016)
            (180,0.022) (190,0.018) (200,0.015) (210,0.020)
            (220,0.017) (230,0.014) (240,0.019) (250,0.016)
            (260,0.015) (270,0.018) (280,0.014) (290,0.017) (300,0.013)
          };
        \addlegendentry{incorrect}
        \addplot[only marks, mark=triangle*, mark size=1.4pt,
                 blue!55!black, forget plot]
          coordinates {(60,0.079) (115,0.074) (172,0.074)
                       (238,0.076) (300,0.070)};
      \end{axis}
    \end{tikzpicture}
  \end{subfigure}%
  \hspace{0.040\linewidth}
  \begin{subfigure}[t]{0.285\linewidth}
    \centering
        \begin{tikzpicture}[baseline=(current bounding box.north)]
      \begin{axis}[
        ybar,
        scale only axis,
        width=0.90\linewidth,
       height=2.20cm,
      bar width=4.2pt,
      enlarge x limits=0.30,
      ymin=-6.9, ymax=4.5,
      ylabel={},
        symbolic x coords={pkmask, rndmask, pkinj, rndinj},
xtick=data,
xticklabels={P-M,R-M,P-I,R-I},
x tick label style={font=\tiny, align=center},
        ytick={-5,-3,-1,1,3},
        tick label style={font=\scriptsize},
        label style={font=\scriptsize},
        title={(b) causal intervention, $\Delta$ acc.},
        title style={font=\scriptsize, yshift=0.2ex},
        extra y ticks={0}, extra y tick labels={},
        extra y tick style={grid=major,
                            grid style={dashed,gray!50}},
        axis lines=left,
        clip=false,
        nodes near coords style={font=\tiny, inner sep=1pt},
        every node near coord/.append style={font=\tiny},
      ]
\addplot[fill=red!65!black, draw=red!65!black]
  coordinates {(pkmask,-4.9) (rndmask,-1.3)
               (pkinj,3.0)   (rndinj,-0.9)};

\node[font=\tiny, anchor=north]
  at ([xshift=5pt]axis cs:pkmask,-5.20) {-4.9};

\node[font=\tiny, anchor=north]
  at (axis cs:rndmask,-1.55) {-1.3};

\node[font=\tiny, anchor=south]
  at (axis cs:pkinj,3.28) {3};

\node[font=\tiny, anchor=north]
  at (axis cs:rndinj,-1.15) {-0.9};
      \end{axis}
    \end{tikzpicture}
 \end{subfigure}%
\hspace{0.040\linewidth}
  \begin{subfigure}[t]{0.285\linewidth}
    \centering
        \begin{tikzpicture}[baseline=(current bounding box.north)]
      \begin{axis}[
        ybar,
        scale only axis,
        width=0.90\linewidth,
height=2.20cm,
bar width=2.4pt,
enlarge x limits=0.08,
ymin=0, ymax=0.18,
ylabel={},
xlabel={attention head},
        xtick=\empty,
        ytick={0.05,0.10,0.15},
        yticklabels={.05,.10,.15},
        tick label style={font=\scriptsize},
        label style={font=\scriptsize},
        title={(c) head contribution},
        title style={font=\scriptsize, yshift=0.2ex},
        axis lines=left,
clip=false,
      ]
        \addplot[fill=blue!55!black, draw=blue!55!black]
          coordinates {
            (1,0.158) (2,0.134) (3,0.105) (4,0.096)
            (5,0.082) (6,0.075) (7,0.058) (8,0.051)
          };
        \addplot[fill=gray!55, draw=gray!55]
          coordinates {
            (9,0.028)  (10,0.025) (11,0.022) (12,0.020)
            (13,0.018) (14,0.016) (15,0.015) (16,0.014)
            (17,0.013) (18,0.012) (19,0.011) (20,0.011)
            (21,0.010) (22,0.010) (23,0.009) (24,0.009)
            (25,0.008) (26,0.008) (27,0.007) (28,0.007)
            (29,0.007) (30,0.006) (31,0.006) (32,0.006)
          };
        \draw[dashed, black!60, thin]
          (axis cs:8.5,0) -- (axis cs:8.5,0.18);
        \node[font=\tiny, anchor=west] at (axis cs:8.9,0.142)
            {top-$8$: $\geq70\%$};
      \end{axis}
    \end{tikzpicture}
  \end{subfigure}
{\centering
\pgfplotslegendfromname{teaserlegend}\par
}
\vspace{-0.9em}
    \caption{\textbf{Visual revisits are causal and localized.}
  Correct rollouts show recurring CMA peaks, while incorrect rollouts
  lose visual grounding after the initial window. Masking revisit peaks
  hurts accuracy more than masking random non-peaks, and peak injection
  partially recovers failed rollouts. A small set of heads explains most
  of the causal effect.}
  \vspace{-1.5em}
  \label{fig:teaser}
\end{figure}

Two analyses clarify what the revisit trace contributes. First,
\cref{prop:expressiveness} shows that revisit-conditioned dynamic
discounting can induce retroactive credit couplings through future
revisit peaks that same-step local advantage reweighting cannot express.
Second, \cref{thm:variance-reduction} provides an oracle conditioning
analysis showing when revisit-gated value prediction can reduce
return-residual variance, while Appendix~\ref{app:thm1-proof} makes
explicit the trajectory-dependent surrogate bias introduced by post-hoc
revisit conditioning. We also
introduce RAP, a method-agnostic diagnostic that measures where positive
return mass concentrates relative to revisit peaks. Empirically, RECAP
improves HallusionBench accuracy over GRPO by $+3.5$--$+4.1$ points
across Qwen2.5-VL-7B, InternVL3-8B, and LLaVA-OV-7B, corresponding to
$+6.1$--$+6.5$ points over the SFT bases, while preserving reasoning and
text-only capability. The gains persist on a 72B LoRA-tuned model,
transfer zero-shot to Chinese benchmarks, remain robust under visual
perturbations, align with human judgments of image-groundedness, and
compose with VPPO.

Our contributions are:
\begin{itemize}[leftmargin=*, noitemsep, topsep=0pt]
  \item We identify visual revisit as a temporal grounding signal in
  long-CoT VLM reasoning, and validate it with peak-masking,
  peak-injection, signal-control, and head-localization analyses.
  \item We propose RECAP, a GRPO-compatible clipped-surrogate method
  that uses detached revisit traces for advantage estimation, dynamic
  discounting, and gated value prediction without extra rewards,
  additional rollouts, or inference-time changes.
  \item We analyze why dynamic discounting is more expressive than
  same-step local reweighting, provide an oracle conditioning analysis
  for revisit-gated value prediction, and introduce RAP as a
  method-agnostic revisit-aware training diagnostic.
  \item We demonstrate consistent improvements across three backbones,
  a larger LoRA-tuned model, cross-lingual benchmarks, visual
  perturbations, human evaluation, and composition with VPPO.
\end{itemize}

\section{Related Work}
\label{sec:related}

\paragraph{RL post-training for long-CoT VLM reasoning.}
Reinforcement learning with verifiable answers has become a standard
post-training tool for long-CoT reasoning models. GRPO~\cite{grpo}
removes the PPO critic and normalizes rewards within sampled groups,
making final-answer supervision practical at scale. Recent multimodal
variants inject visual information into this recipe: VPPO~\cite{vppo}
modulates policy updates with visual signals, VAPO~\cite{vapo} uses
learned visual anchors, and TVC~\cite{tvc} re-injects visual evidence
during generation. Other methods refine credit assignment through
per-token shaping in $\lambda$-GRPO~\cite{lambdagrpo} or structured
advantages in TreeAdv~\cite{treeadv}. These methods improve how final
rewards affect updates, but they do not explicitly model when a long
multimodal trajectory should return to the image. RECAP targets this
temporal grounding gap with detached visual revisit traces that shape
advantage estimation, dynamic discounting, and value prediction.

\paragraph{Visual hallucination and grounding during generation.}
Long-CoT VLMs can produce fluent reasoning chains that drift from the
image and yield visually unsupported conclusions. Benchmarks and
analyses such as HallusionBench, POPE, MMHalBench, and TVC document this
failure mode across recognition, spatial reasoning, and instruction
following~\cite{hallusionbench, pope, mmhalbench, tvc}. Existing
mitigations improve grounding through inference-time interventions, such
as visual re-injection in TVC~\cite{tvc} or attention-suppression
decoding in OPERA~\cite{opera}, or through preference/data-level
supervision with image-perturbed negatives~\cite{vlm-head-analysis}.
These approaches are valuable but mostly treat grounding as a static
correction or decoding-time intervention. Our focus is complementary:
we ask whether the model's own trajectory contains moments when visual
evidence should re-enter the chain, and whether these revisit moments
predict and causally support correctness.

\paragraph{Revisit-aware credit assignment and cross-modal mechanisms.}
Attention-derived statistics have been used for training and diagnosis,
but their role in long-CoT multimodal credit assignment remains
underexplored. Raw attention is hard to interpret in isolation because
attention mass varies with token type, position, and layer. RECAP
therefore defines revisit as a local rise of cross-modal attention above
a trajectory-specific baseline, marking when the model returns to visual
evidence after language-internal reasoning. Unlike same-step or
token-local reweighting, revisit-conditioned discounting can propagate
final-answer credit retroactively through future revisit peaks
(Proposition~\ref{prop:expressiveness}), while the gated value predictor
addresses the return-residual bias--variance trade-off studied in
Theorem~\ref{thm:variance-reduction} and Appendix~\ref{app:thm1-proof}.
Related mechanistic studies localize cross-modal interaction to small
head subsets~\cite{vlm-head-analysis, causal-probing}; we use this not
as the method itself, but as evidence that RECAP encourages identifiable
cross-modal mechanisms rather than diffuse attention noise
(\cref{sec:exp-mechinterp}).

%

\section{The RECAP Framework}
\label{sec:method}

\subsection{Preliminaries}
\label{sec:method-prelim}

\paragraph{GRPO for VLMs.}
A VLM defines an autoregressive policy
$\pi_\theta(y_t\mid h_t)$ with context
$h_t=(\mathbf v,q,y_{<t})$. For a prompt--image pair
$(\mathbf v,q)$, GRPO~\cite{grpo} samples a group of $M$ CoT rollouts
$\{\tau_g\}_{g=1}^{M}$, scores each rollout with a terminal verifier
reward $R(\tau_g)$, and assigns the same group-normalized advantage to
every token in the rollout:
\begin{equation}
  \hat A_t^{(g)}
  =
  \frac{R(\tau_g)-\mu_R}{\sigma_R+\epsilon},
  \qquad
  \mu_R=\tfrac{1}{M}\sum_{j=1}^{M}R(\tau_j),
  \quad
  \sigma_R^2=\tfrac{1}{M}\sum_{j=1}^{M}(R(\tau_j)-\mu_R)^2 .
\end{equation}
The clipped objective is
\begin{equation}
  \mathcal{L}^{GRPO}(\theta)
  =
  -\mathbb{E}_{\tau,t}
  \left[
    \min\!\left(
      \rho_t\hat A_t,
      \mathrm{clip}(\rho_t,1{-}\epsilon,1{+}\epsilon)\hat A_t
    \right)
  \right]
  +
  \beta_{\mathrm{KL}}\,
  \mathrm{KL}\!\left[\pi_\theta\,\|\,\pi_{\mathrm{ref}}\right],
  \label{eq:grpo}
\end{equation}
where
$\rho_t=\pi_\theta(y_t\mid h_t)/\pi_{\theta_{\mathrm{old}}}(y_t\mid h_t)$.
RECAP keeps the GRPO sampling protocol, clipped policy-ratio update,
and KL anchor, but replaces the uniform token advantage with a
revisit-conditioned surrogate advantage.

\paragraph{Cross-modal attention (CMA).}
Let $\mathcal V$ index visual tokens in the joint visual--text sequence.
For layer $\ell$ and head $h$, the attention mass from text token $y_t$
to the image is
$\mathrm{CMA}_t^{(\ell,h)}=\sum_{j\in\mathcal V}\alpha_{t,j}^{(\ell,h)}$.
We average over a fixed middle-layer set $\mathcal I$ and all heads:
\begin{equation}
  \mathrm{CMA}_t
  =
  \frac{1}{|\mathcal I|N_h}
  \sum_{\ell\in\mathcal I}\sum_{h=1}^{N_h}
  \mathrm{CMA}_t^{(\ell,h)} .
  \label{eq:cma-mean}
\end{equation}
The middle-third default avoids low-layer positional effects and
high-layer label-projection noise; layer ranges and pooling ablations
are given in Appendix~\ref{app:revisit-signal}.

\subsection{The Revisit Signal}
\label{sec:method-revisit}

Absolute CMA is a weak grounding score because some tokens require
continuous visual attention and others need none. RECAP instead uses
\emph{visual revisit}: a local positive jump in CMA after recent
language-internal reasoning. With window size $W$ (default $16$), let
$\mathcal P_t=\{k:\max(1,t-W)\le k<t\}$ and, for nonempty
$\mathcal P_t$, define
\begin{equation}
  \mathrm{baseline}_t
  =
  \frac{1}{|\mathcal P_t|}
  \sum_{k\in\mathcal P_t}\mathrm{CMA}_k,
  \qquad
  z_t
  =
  \frac{\mathrm{ReLU}(\mathrm{CMA}_t-\mathrm{baseline}_t)}
       {\mathrm{baseline}_t+\epsilon} .
  \label{eq:revisit-raw}
\end{equation}
For the first generated position, where $\mathcal P_t$ is empty, we set
$z_t=0$. We then normalize per rollout with a temperature-scaled sigmoid:
\begin{equation}
  \tilde r_t
  =
  \sigma\!\left(
    \eta
    \frac{z_t-\min_{t'}z_{t'}}
         {\max_{t'}z_{t'}-\min_{t'}z_{t'}+\epsilon}
    -\eta/2
  \right)
  \in[0,1],
  \label{eq:revisit-normalized}
\end{equation}
with default $\eta=6$. Here $z_t$ is a raw revisit signal, not a reward.
The bounded signal can be used as a gate or discount multiplier, is
validated causally in Sec.~\ref{sec:exp-causal}, and is detached before
entering the objective. Importantly, $\tilde r_t$ is computed after a
rollout is sampled from the ordinary policy $\pi_\theta(y_t\mid h_t)$;
it changes training-time advantage construction, not the online sampling
distribution. Further design choices and failure modes are in
Appendix~\ref{app:revisit-signal}.

\subsection{RECAP Objective}
\label{sec:method-recap}

\begin{figure}[t]
  \centering
    \includegraphics[width=1\linewidth, trim=15 150 10 5, clip]{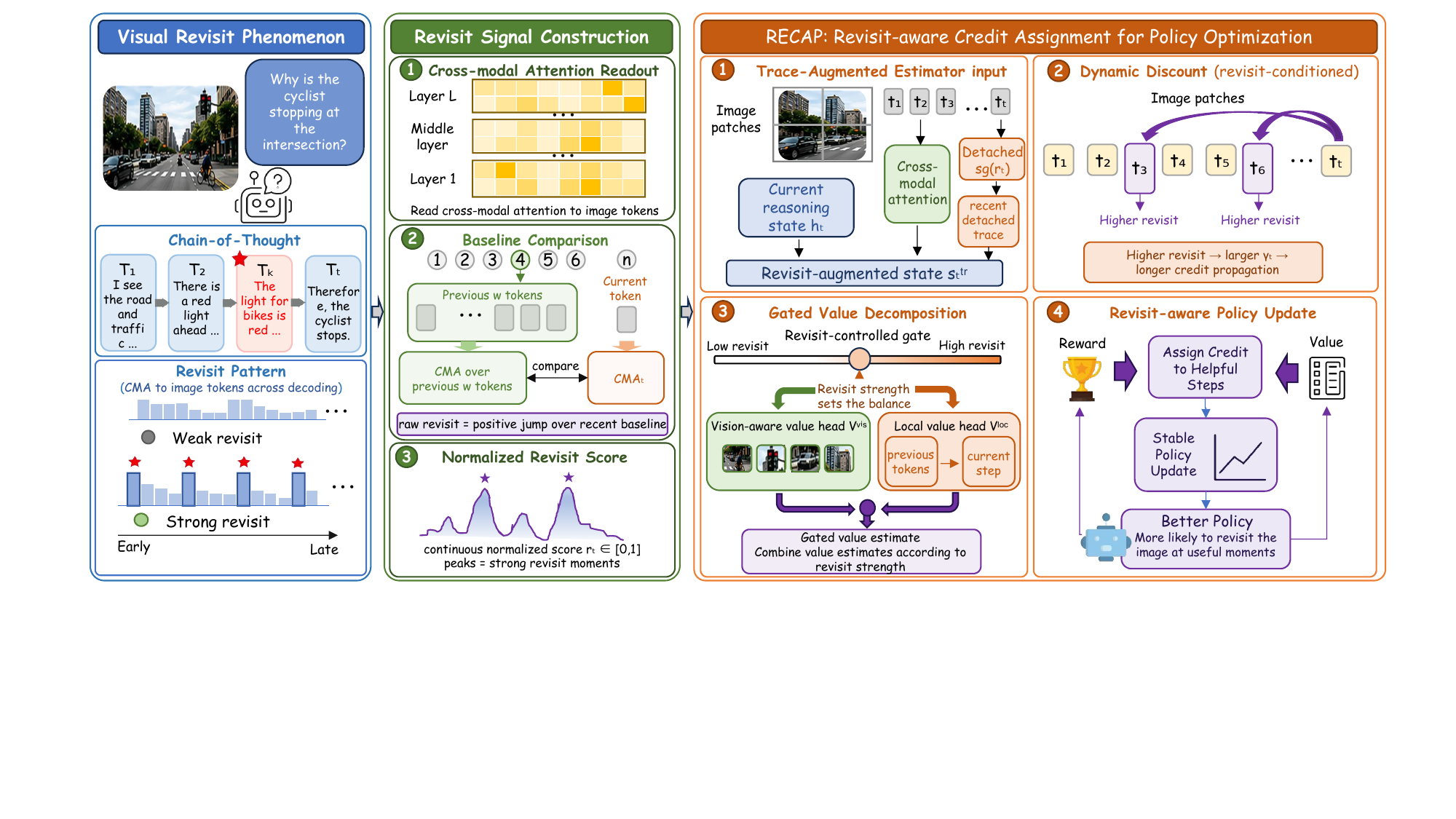}
  \vspace{-1.5em}
    \caption{The RECAP framework. A detached revisit signal \(\tilde r_t\) is
extracted from cross-modal attention and injected into three parts of the
surrogate estimator: the trace-augmented input, dynamic discount, and
revisit-gated value predictor. The rollout policy, clipped ratio, and KL
anchor remain unchanged.}
\vspace{-0.9em}
  \label{fig:framework}
\end{figure}

Given a completed rollout and its detached revisit trace, RECAP builds a
surrogate advantage from three components: a trace-augmented estimator
input, a revisit-conditioned discount, and a revisit-gated value
predictor. Appendix~\ref{app:recap-correspondence} compares these
structural slots with related baselines.

\paragraph{Trace-augmented estimator input.}
For advantage and value estimation, RECAP augments the language-state
context with recent detached revisit traces:
\begin{equation}
  s_t^{\mathrm{tr}}
  =
  \bigl(
    h_t,
    \operatorname{sg}(\tilde r_t),
    \operatorname{sg}(\tilde r_{t-1}),\ldots,
    \operatorname{sg}(\tilde r_{t-K})
  \bigr),
  \label{eq:trace-state}
\end{equation}
with missing past traces padded by zero. This is a post-hoc estimator
input, not the causal state used to sample $y_t$.

\paragraph{Revisit-conditioned dynamic discount.}
A static discount propagates terminal reward through all transitions in
the same way. RECAP instead increases the discount at visually important
positions while keeping it below one:
\begin{equation}
  \gamma_t
  =
  \gamma_{\mathrm{base}}
  +
  \beta(\gamma_{\max}-\gamma_{\mathrm{base}})
  \operatorname{sg}(\tilde r_t),
  \qquad
  0\le\beta\le1,
  \quad
  0<\gamma_{\mathrm{base}}<\gamma_{\max}<1 .
  \label{eq:dynamic-discount}
\end{equation}
We attach $\gamma_t$ to the transition into position $t$, so return at
step $t$ uses $\gamma_{t+1}$ when propagating reward from $t+1$ back to
$t$. Thus future revisit peaks can retroactively increase credit for
earlier decisions.

\paragraph{Revisit-gated value prediction.}
Low-revisit tokens are often language-internal continuations, whereas
high-revisit tokens coincide with renewed visual evidence. We use two
value heads with a shared backbone and combine them by the detached
current revisit value:
\begin{equation}
  V^{\mathrm{gated}}(s_t^{\mathrm{tr}})
  =
  (1-\operatorname{sg}(\tilde r_t))
  V^{\mathrm{loc}}(s_t^{\mathrm{tr}})
  +
  \operatorname{sg}(\tilde r_t)
  V^{\mathrm{vis}}(s_t^{\mathrm{tr}}).
  \label{eq:gated-value}
\end{equation}
Here $V^{\mathrm{loc}}$ and $V^{\mathrm{vis}}$ denote the learned
implementation heads and may use the full trace-augmented input
$s_t^{\mathrm{tr}}$. The oracle analysis in Sec.~\ref{sec:method-thm1}
uses a restricted abstraction to isolate the variance effect of
conditioning on $\tilde r_t$. This predictor is used inside a detached
policy advantage; the bias--variance interpretation is formalized in
Sec.~\ref{sec:method-thm1} and Appendix~\ref{app:thm1-proof}.

\paragraph{Return and value target.}
RECAP starts from the same raw terminal verifier score as GRPO. For a
completed rollout $\tau$ of length $T$, define
$b_t(\tau)=0$ for $t<T$ and $b_T(\tau)=R_{\mathrm{raw}}(\tau)$.
For the revisit-discounted surrogate return, RECAP propagates the
non-negative stream
\[
  u_t=\max(b_t(\tau),0).
\]
The revisit signal $\tilde r_t$ is not treated as a reward; it only
modulates the discount and value prediction. The revisit-discounted
return is
\begin{equation}
  G_t^{\mathrm{VR}}
  =
  \sum_{k=0}^{T-t}
  \left(\prod_{i=1}^{k}\gamma_{t+i}\right)
  u_{t+k} .
  \label{eq:vr-return}
\end{equation}
Both value heads are trained through the gated prediction loss
\begin{equation}
  L_V
  =
  \mathbb E
  \left[
    \left(
      \operatorname{sg}(G_t^{\mathrm{VR}})
      -
      V^{\mathrm{gated}}(s_t^{\mathrm{tr}})
    \right)^2
  \right].
  \label{eq:value-loss}
\end{equation}
The exact reward-stream convention and log-domain return recursion are
specified in Appendix~\ref{app:log-domain}.

\paragraph{Policy surrogate.}
The per-token RECAP advantage is
\begin{equation}
  A_t^{\mathrm{VR}}
  =
  G_t^{\mathrm{VR}}
  -
  V^{\mathrm{gated}}(s_t^{\mathrm{tr}}),
  \label{eq:recap-advantage}
\end{equation}
and the clipped surrogate is
\begin{equation}
  \mathcal L^{\mathrm{RECAP}}(\theta)
  =
  -\mathbb E_{\tau,t}
  \left[
    \min\!\left(
      \rho_t \operatorname{sg}(A_t^{\mathrm{VR}}),
      \mathrm{clip}(\rho_t,1-\epsilon,1+\epsilon)
      \operatorname{sg}(A_t^{\mathrm{VR}})
    \right)
  \right]
  +
  \beta_{\mathrm{KL}}
  \mathrm{KL}\!\left[\pi_\theta\,\|\,\pi_{\mathrm{ref}}\right],
  \label{eq:recap-loss}
\end{equation}
with the same causal policy ratio
$\rho_t=\pi_\theta(y_t\mid h_t)/\pi_{\theta_{\mathrm{old}}}(y_t\mid h_t)$.
Thus no policy-loss gradient flows through
$G_t^{\mathrm{VR}}$, $V^{\mathrm{gated}}$, or $\tilde r_t$; the value
heads are trained only by Eq.~\eqref{eq:value-loss}. The revisit trace
therefore affects the detached advantage, not the rollout distribution.
Vanilla GRPO is recovered by replacing $A_t^{\mathrm{VR}}$ with the
group-normalized terminal advantage $\hat A_t$.

\subsection{The RAP Metric}
\label{sec:method-rap}

Accuracy alone cannot reveal whether a policy places positive return on
visually grounded reasoning steps. We therefore report Revisit-Aware
Propagation ($\mathrm{RAP}$), the return-weighted concentration of
revisit signal. For evaluation, use a fixed method-independent return
$\bar G_t^{\mathrm{eval}}$ computed from the terminal verifier reward
with the same scalar discount for all methods. Then
\begin{equation}
  \mathrm{RAP}(\pi_\theta)
  =
  \mathbb E_{\tau\sim\pi_\theta}
  \left[
    \frac{\sum_t \tilde r_t\,\bar G_t^{\mathrm{eval},+}}
         {\sum_t \bar G_t^{\mathrm{eval},+}+\epsilon}
  \right],
  \qquad
  \bar G_t^{\mathrm{eval},+}
  =
  \max(\bar G_t^{\mathrm{eval}},0).
  \label{eq:rap}
\end{equation}
RAP is bounded, insensitive to gratuitous revisit spikes at zero-return
positions, and method-agnostic because it depends only on forward-pass
attention and verifier rewards. Appendix~\ref{app:revisit-signal}
proves these properties; Sec.~\ref{sec:exp-rap} validates its empirical
use as a training diagnostic.

\subsection{Proposition 1: A Toy Separation from Same-Step Reweighting}

Same-step local reweighting scales an advantage by $w(\tilde r_t)$ and
therefore only uses the current token's revisit value. In contrast,
dynamic discounting multiplies future discounts, so later revisit peaks
can change the credit assigned to earlier actions.

\begin{proposition}[Separation from same-step local reweighting]
\label{prop:expressiveness}
There exist a finite-horizon MDP, a revisit-pattern distribution, and
parameters $(\beta,\gamma_{\mathrm{base}},\gamma_{\max})$ such that
RECAP's revisit-conditioned dynamic discount induces a strict earlier
action preference that no strictly positive same-step local reweighting
$w(\tilde r_t)A_t$ can induce. Equivalently, in the one-decision
policy-gradient construction of Appendix~\ref{app:prop1-proof}, the
dynamic-discount fixed point is outside the union of fixed-point sets
induced by such local reweightings.
\end{proposition}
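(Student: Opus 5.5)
The plan is to build the smallest explicit construction: one binary decision at $t=1$, followed by deterministic continuations that differ only in their \emph{future} revisit pattern.

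\textbf{The MDP.} At $t=1$ the agent chooses action $a\in\{A,B\}$. Both actions have the same current revisit value $\tilde r_1=c$ and lead to trajectories of length $T=3$ with the same terminal reward $R_{\mathrm{raw}}=1$.
\begin{itemize}[leftmargin=*, noitemsep, topsep=0pt]
  \item After $A$, the trace is $(\tilde r_2,\tilde r_3)=(1,1)$: a future revisit peak.
  \item After $B$, the trace is $(\tilde r_2,\tilde r_3)=(0,0)$.
\end{itemize}
The revisit-pattern distribution is a point mass on these two traces. Take $\beta=1$ and any $0<\gamma_{\mathrm{base}}<\gamma_{\max}<1$.

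\textbf{RECAP's induced preference.} By Eq.~\eqref{eq:vr-return}, with $u_t$ nonzero only at $T$,
\[
  G_1^{\mathrm{VR}}(A)=\gamma_{\max}^2,
  \qquad
  G_1^{\mathrm{VR}}(B)=\gamma_{\mathrm{base}}^2,
\]
so $G_1^{\mathrm{VR}}(A)>G_1^{\mathrm{VR}}(B)$. The baseline $V^{\mathrm{gated}}(s_1^{\mathrm{tr}})$ depends only on the state at $t=1$. That state is identical for both actions, since $h_1$ and the current and past traces coincide, so the baseline cancels in the comparison. We use the one-decision softmax policy $\pi_\theta(A)=\sigma(\theta)$. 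The expected policy-gradient direction $\mathbb E[\nabla\log\pi\cdot A^{\mathrm{VR}}]$ is then proportional to $\pi(A)\pi(B)\bigl(G(A)-G(B)\bigr)>0$ for every interior $\theta$. The surrogate dynamics therefore has no interior fixed point and drives $\pi(A)\to1$: the fixed point is the deterministic policy choosing $A$.

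\textbf{Same-step reweighting cannot do this.} Consider any strictly positive $w$ and the advantage $w(\tilde r_1)A_1$. Here $A_1$ is the local terminal-reward advantage, i.e.\ GRPO's $\hat A_t$ or a static-discount return minus the same state baseline. Since $\tilde r_1=c$ for both actions, the weight is the common constant $w(c)>0$. The rewards and the static discounting are also identical across $A$ and $B$, so $A_1(A)=A_1(B)$. Hence the gradient is $w(c)\pi(A)\pi(B)\cdot0=0$ for every $\theta$. Every policy is then a fixed point, but no strict preference for $A$ is induced, since the dynamics is stationary at any initialization. In particular, from the uniform initialization the local methods stay at $\pi(A)=1/2$, while RECAP converges to $\pi(A)=1$. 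The same holds after adding any common KL anchor to the uniform reference, which moves both toward the same interior point only in the local case. So the RECAP fixed point lies outside the union of the local fixed-point sets reachable from that initialization.

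\textbf{Main obstacle.} The delicate step is making ``fixed-point set'' precise so that the separation is strict rather than vacuous. With zero gradient, every policy is trivially fixed under local reweighting. I would resolve this by including the KL anchor to the uniform $\pi_{\mathrm{ref}}$ with $\beta_{\mathrm{KL}}>0$. The local-reweighting fixed point is then exactly $\pi(A)=1/2$ for all $w$, so the union of their fixed-point sets is the singleton $\{1/2\}$. RECAP's fixed point instead solves
\[
  \pi(A)\pi(B)\,\Delta G=\beta_{\mathrm{KL}}\,\partial_\theta \mathrm{KL},
\]
with $\Delta G=\gamma_{\max}^2-\gamma_{\mathrm{base}}^2>0$. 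This equation has a unique solution with $\pi(A)>1/2$, which settles the separation cleanly.
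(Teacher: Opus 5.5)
Your argument is correct, but it reaches the separation by a different mechanism than the paper.

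The paper keeps the revisit trace action-independent, $\tilde{\mathbf r}=(0,0,1)$, and varies the rewards instead. Action $a_+$ earns $cR$ at step~3 and $a_-$ earns $R$ at step~2, with $1/\gamma_{\max}<c<1/\gamma_{\mathrm{base}}$. Then $w(0)A_1$ strictly prefers $a_-$, while the dynamic discount, which raises only $\gamma_3$ to $\gamma_{\max}$, strictly prefers $a_+$. Because the two families have opposite strict signs, the limiting policies $\pi(a_+)=0$ and $\pi(a_+)=1$ are separated with no regularizer.

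You instead fix the rewards and let the future trace depend on the action. Every same-step scheme is then exactly indifferent, while RECAP strictly prefers $A$. This buys three things:
\begin{itemize}
\item no parameter window;
\item terminal-only rewards, matching RECAP's actual stream $u_t$, whereas the paper needs an intermediate reward;
\item robustness to the static discount the local family uses. The paper's reversal silently requires that discount to be $\gamma_{\mathrm{base}}$, or anything below $1/c$. With static $\gamma_{\max}$ or an undiscounted GRPO advantage, the local method also prefers $a_+$. Your indifference holds for every static $\gamma$ and for $\hat A_t$.
\end{itemize}

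The price is the degeneracy you flagged. Under the unregularized definition $g(\theta)=0$ of Appendix~\ref{app:prop1-proof}, indifference makes the local fixed-point set the entire policy class, so the \emph{equivalently} clause fails as literally defined. Your KL-anchored repair is sound: the stationary points are $\theta=0$ versus $\theta^*=\Delta G/\beta_{\mathrm{KL}}$. It does, however, change that definition, and your reachable-from-initialization variant is unnecessary.

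A simpler fix keeps all your advantages: give $B$ terminal reward $1+\delta$ with $0<\delta<(\gamma_{\max}/\gamma_{\mathrm{base}})^2-1$. Every strictly positive same-step scheme, under any static discount, then strictly prefers $B$. RECAP still strictly prefers $A$, so you get a genuine reversal without any KL term.
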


This isolates the retroactive coupling created by
$\prod_i\gamma_i$: a future revisit peak can affect earlier credit,
whereas $w(\tilde r_t)$ only rescales the current step. The horizon-3
construction, scope, and relation to trajectory-dependent reweighting
are in Appendix~\ref{app:prop1-proof}.

\subsection{Theorem 1: Revisit-Gated Return-Residual Variance}
\label{sec:method-thm1}

The following result is an oracle residual-variance analysis under
post-hoc revisit conditioning, not a policy-gradient optimality
guarantee. Let $G_t:=G_t^{\mathrm{VR}}$ and define
\[
x_t=(h_t,\operatorname{sg}(\tilde r_{t-1}),\ldots,
\operatorname{sg}(\tilde r_{t-K})),\quad
s_t^{\mathrm{tr}}=(x_t,\operatorname{sg}(\tilde r_t)),
\]
with oracle predictors
$V^0(x_t)=\mathbb E[G_t\mid x_t]$ and
$V^r(x_t,\tilde r_t)=\mathbb E[G_t\mid x_t,\tilde r_t]$.

\begin{assumption}[Revisit relevance]
\label{ass:relevance}
The revisit trace provides incremental information about the return:
$\mathbb E[G_t\mid x_t,\tilde r_t]\neq
\mathbb E[G_t\mid x_t]$ on a set of positive measure.
\end{assumption}

\begin{assumption}[Idealized linear-in-revisit oracle form]
\label{ass:value-fidelity}
For the closed-form expression below, assume that there exist oracle
component functions $\bar V^{\mathrm{loc}}(x_t)$ and
$\bar V^{\mathrm{vis}}(x_t)$ such that
\[
\mathbb E[G_t\mid x_t,\tilde r_t]
=
(1-\tilde r_t)\bar V^{\mathrm{loc}}(x_t)
+
\tilde r_t\bar V^{\mathrm{vis}}(x_t).
\]
This oracle restriction is used only for Eq.~\eqref{eq:linear-residual-variance};
the learned heads in Eq.~\eqref{eq:gated-value} take the full
$s_t^{\mathrm{tr}}$ as input and need not realize this decomposition.
\end{assumption}

\begin{theorem}[Oracle return-residual variance reduction]
\label{thm:variance-reduction}
Assume $G_t$ has finite second moment. Under
Assumption~\ref{ass:relevance},
\begin{equation}
\begin{aligned}
&
\mathrm{Var}\!\left(G_t-V^0(x_t)\right)
-
\mathrm{Var}\!\left(G_t-V^r(x_t,\tilde r_t)\right)
\\[-1mm]
&\quad =
\mathbb E_{x_t}
\left[
  \mathrm{Var}\!\left(
    \mathbb E[G_t\mid x_t,\tilde r_t]
    \,\middle|\,
    x_t
  \right)
\right]
\ge0 .
\end{aligned}
\label{eq:residual-variance-reduction}
\end{equation}
If Assumption~\ref{ass:value-fidelity} also holds, then
\begin{equation}
\begin{aligned}
&
\mathrm{Var}\!\left(G_t-V^0(x_t)\right)
-
\mathrm{Var}\!\left(G_t-V^r(x_t,\tilde r_t)\right)
\\[-1mm]
&\quad =
\mathbb E_{x_t}
\left[
  \bigl(
    \bar V^{\mathrm{vis}}(x_t)-\bar V^{\mathrm{loc}}(x_t)
  \bigr)^2
  \mathrm{Var}(\tilde r_t\mid x_t)
\right].
\end{aligned}
\label{eq:linear-residual-variance}
\end{equation}
\end{theorem}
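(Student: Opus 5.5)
The plan is to reduce everything to the law of total variance applied to the two residuals. Write $V^0=V^0(x_t)$ and $V^r=V^r(x_t,\tilde r_t)$. By the tower property, $V^0=\mathbb E[V^r\mid x_t]$, and both residuals $G_t-V^0$ and $G_t-V^r$ have mean zero, so each variance equals a second moment. Finite second moment of $G_t$ guarantees that $V^0,V^r\in L^2$ (conditional expectation is an $L^2$ contraction). This justifies every expansion below.

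\textbf{Step 1 (orthogonal decomposition).} Split
\[
G_t-V^0=(G_t-V^r)+(V^r-V^0).
\]
The cross term vanishes: $V^r-V^0$ is measurable with respect to $\sigma(x_t,\tilde r_t)$. Conditioning on $(x_t,\tilde r_t)$ gives $\mathbb E[(G_t-V^r)(V^r-V^0)]=\mathbb E[(V^r-V^0)\,\mathbb E[G_t-V^r\mid x_t,\tilde r_t]]=0$. Hence
\[
\mathbb E[(G_t-V^0)^2]-\mathbb E[(G_t-V^r)^2]=\mathbb E[(V^r-V^0)^2].
\]
Since $V^0=\mathbb E[V^r\mid x_t]$, the tower property gives
\[
\mathbb E[(V^r-V^0)^2]=\mathbb E_{x_t}\bigl[\mathrm{Var}(\mathbb E[G_t\mid x_t,\tilde r_t]\mid x_t)\bigr]\ge0,
\]
which is Eq.~\eqref{eq:residual-variance-reduction}. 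Assumption~\ref{ass:relevance} is not needed for the identity itself. It upgrades the inequality to a strict one, because $V^r\neq V^0$ on a set of positive measure makes $\mathbb E[(V^r-V^0)^2]>0$. I would remark on this rather than claim more than the stated $\ge0$.

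\textbf{Step 2 (linear form).} Under Assumption~\ref{ass:value-fidelity}, write $V^r=\bar V^{\mathrm{loc}}(x_t)+\tilde r_t\,\Delta(x_t)$ with $\Delta=\bar V^{\mathrm{vis}}-\bar V^{\mathrm{loc}}$. Conditional on $x_t$, both $\bar V^{\mathrm{loc}}(x_t)$ and $\Delta(x_t)$ are constants. Therefore
\[
\mathrm{Var}(V^r\mid x_t)=\Delta(x_t)^2\,\mathrm{Var}(\tilde r_t\mid x_t).
\]
Substituting this into Step 1 yields Eq.~\eqref{eq:linear-residual-variance}. Boundedness $\tilde r_t\in[0,1]$ ensures $\mathrm{Var}(\tilde r_t\mid x_t)$ is finite.

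\textbf{Main obstacle: integrability and measurability.} The computation itself is routine. The delicate part is interpreting $\mathrm{Var}(\cdot\mid x_t)$ when $x_t$ contains the full history $h_t$. I would work with regular conditional distributions on the (countable or standard Borel) trajectory space. I would also check that $\Delta(x_t)^2\,\mathrm{Var}(\tilde r_t\mid x_t)$ is integrable. This follows from the identity itself, since its left side is bounded by $\mathbb E[(G_t-V^0)^2]\le\mathbb E[G_t^2]<\infty$. This works even if $\bar V^{\mathrm{loc}}$ and $\bar V^{\mathrm{vis}}$ are individually unbounded where $\mathrm{Var}(\tilde r_t\mid x_t)=0$.
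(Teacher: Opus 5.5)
Your proposal is correct and takes essentially the same route as the paper. Your Pythagorean split $G_t-V^0=(G_t-V^r)+(V^r-V^0)$ with a vanishing cross term is the $L^2$ form of the law of total variance, which the paper applies conditionally on $x_t$ via $\mathrm{Var}(G_t-V^0)=\mathbb E[\mathrm{Var}(G_t\mid x_t)]$ and $\mathrm{Var}(G_t-V^r)=\mathbb E[\mathrm{Var}(G_t\mid x_t,\tilde r_t)]$; your Step~2 and your remark that Assumption~\ref{ass:relevance} only yields strictness both match the paper's own argument and comment.
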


Thus conditioning the oracle predictor on the current revisit cannot
increase return-residual variance, and the linear oracle form makes the
reduction depend on both the local/visual value gap and the conditional
variation of $\tilde r_t$. Since $\tilde r_t$ is a post-hoc trajectory
feature, RECAP remains a revisit-conditioned surrogate with an explicit
bias--variance trade-off; Appendix~\ref{app:thm1-proof} gives the proof
and derives the corresponding bias term.

%

\section{Experiments}
\label{sec:exp}

\subsection{Setup}
\label{sec:exp-setup}

We evaluate RECAP on Qwen2.5-VL-7B~\cite{qwen25vl},
InternVL3-8B~\cite{internvl3}, and LLaVA-OneVision-7B~\cite{llavaov};
a larger Qwen2.5-VL-72B LoRA run is reported in
Appendix~\ref{app:hyperparams}. All methods use the same two-stage
pipeline: Stage~1 supervised fine-tuning on 50K filtered long-CoT
examples, followed by Stage~2 RL on 30K disjoint verifiable-answer
questions (20K for 72B), with group size $M=8$ and the same terminal
verifier. RECAP's non-negative terminal stream is used only inside its
revisit-discounted surrogate return; it introduces no additional reward
model or auxiliary supervision. Hyperparameters are tuned on held-out
splits for the baselines and then frozen for RECAP; full training,
compute, and seed details are in Appendix~\ref{app:hyperparams}.

We report six benchmark groups: hallucination
(HallusionBench~\cite{hallusionbench}, POPE~\cite{pope},
MMHalBench~\cite{mmhalbench}), reasoning
(MathVista~\cite{mathvista}, MMMU~\cite{mmmu},
MathVerse~\cite{mathverse}, MM-Vet~\cite{mmvet}), grounding,
text-only capability, cross-lingual Chinese benchmarks, and adversarial
image/text perturbations. The main baselines are GRPO, VPPO, RECAP, and
a no-RL SFT+TVC-style reference; VAPO, $\lambda$-GRPO, and TreeAdv are
additionally trained on Qwen2.5-VL-7B. Headline results use 8 seeds and
ablations use 5; boldface marks the best value within each comparison
block, and daggers mark RECAP-over-GRPO significance at $p<0.05$ after
Holm--Bonferroni correction.

\subsection{Main Results}
\label{sec:exp-main}

\Cref{tab:main-3backbones} gives the main result. RECAP improves
HallusionBench accuracy over GRPO by $+3.8$, $+3.5$, and $+4.1$ points
on Qwen2.5-VL-7B, InternVL3-8B, and LLaVA-OneVision-7B, respectively,
while preserving reasoning and text-only capability. VPPO yields smaller
hallucination gains and slightly weaker reasoning scores, while the
no-RL TVC-style reference remains below RECAP. The same pattern persists
on the Qwen2.5-VL-72B LoRA setting, where RECAP improves
HallusionBench by $+3.1$ points over LoRA-GRPO with no inference-time
cost (Appendix~\ref{app:hyperparams}).

\begin{table}[t]
  \centering
  \scriptsize
  \setlength{\tabcolsep}{2.2pt}
  \renewcommand{\arraystretch}{1.10}
  \resizebox{\linewidth}{!}{%
  \begin{tabular}{l ccc cccc ccc cc c}
    \toprule
    \tabrowcolor{TableHeader}
    \textbf{Method} &
        \multicolumn{3}{c}{\textbf{Hallucination}} &
        \multicolumn{4}{c}{\textbf{Reasoning}} &
        \multicolumn{3}{c}{\textbf{Grounding}} &
        \multicolumn{2}{c}{\textbf{Text-only}} & \textbf{$\Delta$} \\
    \tabrowcolor{TableHeader}
        & \theadc{Hall.\\$\uparrow$} & \theadc{POPE\\$\uparrow$} & \theadc{MMHal\\$\uparrow$} &
         \theadc{MathV.\\$\uparrow$} & \theadc{MMMU\\$\uparrow$} & \theadc{MathVr.\\$\uparrow$} & \theadc{MM-Vet\\$\uparrow$} &
         \theadc{GQA\\$\uparrow$} & \theadc{BLINK\\$\uparrow$} & \theadc{MMVP\\$\uparrow$} &
         \theadc{MMLU\\$\uparrow$} & \theadc{GSM8K\\$\uparrow$} & \\
    \midrule
    \tabrowcolor{TableGroup}
    \multicolumn{14}{l}{\textbf{Qwen2.5-VL-7B}} \\
SFT base     & 60.1 & 86.5 & 3.12 & 59.4 & 41.3 & 33.2 & 51.4 & 63.8 & 48.2 & 54.0 & \textbf{72.4} & \textbf{78.9} & -- \\
TVC (no RL)  & 61.4 & 86.9 & 3.18 & 59.2 & 41.0 & 33.5 & 51.8 & 64.1 & 48.5 & 54.3 & 72.3 & 78.7 & $+1.3$ \\
GRPO         & 62.8 & 87.1 & 3.27 & $\underline{61.2}$ & $\underline{42.5}$ & $\underline{34.7}$ & $\underline{52.8}$ & 64.3 & $\underline{49.1}$ & 54.8 & $\underline{72.1}$ & $\underline{78.5}$ & $+2.7$ \\
VPPO         & $\underline{64.2}$ & $\underline{87.6}$ & $\underline{3.35}$ & 60.8 & 41.9 & 34.1 & 52.1 & $\underline{64.7}$ & 48.9 & $\underline{55.1}$ & 71.8 & 77.9 & $\underline{+4.1}$ \\
\tabrowcolor{BestRow}
\textbf{RECAP} & \textbf{66.6}$^{\dagger}$ & \textbf{88.4} & \textbf{3.58} & \textbf{63.6} & \textbf{42.7} & \textbf{36.1} & \textbf{53.9} & \textbf{65.2} & \textbf{50.1} & \textbf{56.4} & 72.3 & 78.6 & \textbf{+6.5}$^{\dagger}$ \\
    \midrule
    \tabrowcolor{TableGroup}
    \multicolumn{14}{l}{\textbf{InternVL3-8B}} \\
SFT base     & 58.6 & 87.2 & 3.21 & 61.7 & 44.8 & 35.0 & 53.1 & 65.0 & 49.6 & 53.2 & \textbf{73.8} & \textbf{80.1} & -- \\
TVC (no RL)  & 60.0 & 87.5 & 3.28 & 61.5 & 44.5 & 35.3 & 53.4 & 65.3 & 49.9 & 53.6 & 73.6 & 79.9 & $+1.4$ \\
GRPO         & 61.2 & 87.8 & 3.34 & $\underline{63.1}$ & $\underline{45.6}$ & $\underline{36.2}$ & $\underline{54.2}$ & 65.6 & $\underline{50.2}$ & 54.0 & $\underline{73.4}$ & $\underline{79.6}$ & $+2.6$ \\
VPPO         & $\underline{62.4}$ & $\underline{88.1}$ & $\underline{3.41}$ & 62.6 & 44.9 & 35.7 & 53.7 & $\underline{65.9}$ & 50.0 & $\underline{54.3}$ & 73.1 & 79.1 & $\underline{+3.8}$ \\
\tabrowcolor{BestRow}
\textbf{RECAP} & \textbf{64.7}$^{\dagger}$ & \textbf{89.0} & \textbf{3.62} & \textbf{65.3} & \textbf{45.9} & \textbf{37.4} & \textbf{55.1} & \textbf{66.5} & \textbf{51.1} & \textbf{55.5} & 73.5 & 79.7 & \textbf{+6.1}$^{\dagger}$ \\
    \midrule
    \tabrowcolor{TableGroup}
    \multicolumn{14}{l}{\textbf{LLaVA-OV-7B}} \\
SFT base     & 56.9 & 85.4 & 3.05 & 56.2 & 39.8 & 31.4 & 49.6 & 62.5 & 46.8 & 52.1 & \textbf{70.2} & \textbf{75.4} & -- \\
TVC (no RL)  & 58.5 & 85.8 & 3.11 & 56.0 & 39.6 & 31.7 & 49.9 & 62.8 & 47.0 & 52.4 & 70.0 & 75.2 & $+1.6$ \\
GRPO         & 59.3 & 86.1 & 3.19 & $\underline{58.0}$ & $\underline{40.9}$ & $\underline{32.8}$ & $\underline{50.8}$ & 63.0 & $\underline{47.6}$ & 52.9 & $\underline{69.7}$ & $\underline{74.9}$ & $+2.4$ \\
VPPO         & $\underline{60.6}$ & $\underline{86.5}$ & $\underline{3.25}$ & 57.4 & 40.2 & 32.2 & 50.2 & $\underline{63.3}$ & 47.4 & $\underline{53.1}$ & 69.4 & 74.3 & $\underline{+3.7}$ \\
\tabrowcolor{BestRow}
\textbf{RECAP} & \textbf{63.4}$^{\dagger}$ & \textbf{87.4} & \textbf{3.50} & \textbf{60.5} & \textbf{41.1} & \textbf{34.3} & \textbf{51.9} & \textbf{63.9} & \textbf{48.6} & \textbf{54.3} & 69.9 & 75.0 & \textbf{+6.5}$^{\dagger}$ \\
    \bottomrule
  \end{tabular}%
  }
  \vspace{2pt}
    \caption{Main results across three VLM backbones. Values are mean
benchmark scores or accuracies over $8$ seeds; MMHalBench is reported
on its native score scale. Per-seed logs, standard deviations, standard
errors, and 95\% confidence intervals are reported in the supplementary
reproducibility logs. Bold marks the best value within each
  backbone block; underlines mark the strongest non-RECAP RL baseline
  per column. A dagger marks a significant RECAP-over-GRPO improvement
  after Holm--Bonferroni correction. $\Delta$ denotes the HallusionBench
  gain over the per-backbone SFT base.}
  \vspace{-2em}
  \label{tab:main-3backbones}
\end{table}

\subsection{Ablations and Signal Characterization}
\label{sec:exp-ablations}
\label{sec:exp-signal-indep}

\Cref{tab:ablations-combined} condenses the key ablations on
Qwen2.5-VL-7B. Composing RECAP with VPPO gives an additional gain,
consistent with \cref{prop:expressiveness}: dynamic discounting and
same-step visual reweighting address different slots in the update.
Removing the dynamic discount causes the largest drop, followed by the
gated value predictor and the trace-augmented estimator input; the
variance increase without the gated value matches the trend predicted by
\cref{thm:variance-reduction}. Using the same revisit signal as a
local advantage reweight is also weaker than using it as a dynamic
discount.

\begin{table}[t]
  \centering
  \scriptsize
  \setlength{\tabcolsep}{3pt}
  \renewcommand{\arraystretch}{1.10}
  \begin{tabular}{l ccc c}
    \toprule
    \tabrowcolor{TableHeader}
    \textbf{Configuration} & \theadc{Hallusion\\$\uparrow$} & \theadc{MathVista\\$\uparrow$} & \theadc{MMMU\\$\uparrow$} & \theadc{$\mathrm{Var}(A_t)$\\rel. $\downarrow$} \\
    \midrule
    \tabrowcolor{TableGroup}
    \multicolumn{5}{l}{\textbf{(a) VPPO orthogonality}} \\
    GRPO                        & 62.8 [62.0, 63.5] & 61.2 & 42.5 & $1.81\times$ \\
    VPPO                        & 64.2 [63.3, 65.0] & 60.8 & 41.9 & $1.35\times$ \\
    RECAP                       & $\underline{66.6}$ [65.7, 67.4] & $\underline{63.6}$ & \textbf{42.7} & $1.00\times$ \\
    \tabrowcolor{BestRow}
    \textbf{VPPO~+~RECAP (composition)}  & \textbf{68.0} [67.1, 68.8] & \textbf{63.9} & $\underline{42.6}$ & \textbf{$0.98\times$} \\
    \midrule
    \tabrowcolor{TableGroup}
    \multicolumn{5}{l}{\textbf{(b) Component ablations}} \\
    \tabrowcolor{BestRow}
    \textbf{Full RECAP}                                      & \textbf{66.6} & \textbf{63.6} & \textbf{42.7} & \textbf{$1.00\times$} \\
    $-$ Trace-augmented estimator state ($K=0$)            & $\underline{65.8}$ & $\underline{63.1}$ & $\underline{42.4}$ & $1.08\times$ \\
    $-$ Gated value ($V^{\mathrm{loc}}=V^{\mathrm{vis}}$) & 65.2 & 62.8 & 42.1 & $1.70\times$ \\
    $-$ Dynamic discount ($\beta=0$)        & 64.3 & 62.4 & 42.0 & $1.09\times$ \\
    $-$ Stop-gradient on $\tilde r_t$       & 60.8 & 59.7 & 41.6 & $2.35\times$ \\
    \midrule
    \tabrowcolor{TableGroup}
    \multicolumn{5}{l}{\textbf{(c) Dynamic discount vs. advantage reweight}} \\
    Reweight $w(\tilde r_t) \cdot A_t$ (tuned $w$, Qwen/Intern/LLaVA)
    & 64.6 & 62.8 & 42.3 & $1.21\times$ \\
    \tabrowcolor{BestRow}
Dynamic discount $\gamma_t$ (ours, Qwen/Intern/LLaVA)
    & \textbf{66.6} & \textbf{63.6} & \textbf{42.7} & \textbf{$1.00\times$} \\
\multicolumn{5}{l}{\quad Gap (discount $-$ reweight), Qwen/Intern/LLaVA: $+2.0$ / $+1.6$ / $+2.1$\,pt on Hallusion.} \\
    \bottomrule
  \end{tabular}
  \vspace{2pt}
  \caption{Ablations on Qwen2.5-VL-7B. Bold marks the best value within
  each panel and underlines mark the strongest non-best variant. The
  table compares VPPO composition, component removal, and dynamic-discount
  reweighting variants across accuracy and surrogate-advantage variance.}
   \vspace{-2em}
  \label{tab:ablations-combined}
  \label{tab:orthogonality}           
  \label{tab:component-ablations}     
  \label{tab:discount-vs-reweight}    
\end{table}

\Cref{tab:signal-and-norl} tests whether the effect is specific to the
revisit signal. Replacing $\tilde r_t$ with entropy, logit margin, or
hidden-state norm under the same RECAP framework underperforms; using
revisit only at inference time without RL also fails to recover the full
gain. The positive correlation between revisit and $|A_t|$ supports the
relevance condition in \cref{thm:variance-reduction}. Sensitivity to
layer range, window size, and normalization temperature is reported in
Appendix~\ref{app:hyperparams}.

\begin{table}[t]
  \centering
  \scriptsize
  \setlength{\tabcolsep}{3.5pt}
  \renewcommand{\arraystretch}{1.10}
  \begin{tabular}{l ccc c}
    \toprule
    \tabrowcolor{TableHeader}
    \textbf{Configuration} & \theadc{Qwen2.5-VL\\7B $\uparrow$} & \theadc{InternVL3\\8B $\uparrow$} & \theadc{LLaVA-OV\\7B $\uparrow$} & \textbf{Extra} \\
    \midrule
    \tabrowcolor{TableGroup}
    \multicolumn{5}{l}{\textbf{(a) Signal identity}} \\
    Token entropy                   & 63.2 & 61.6 & 60.1 & $\rho{=}0.29$ \\
    Logit margin                    & 63.0 & 61.4 & 59.8 & $\rho{=}0.24$ \\
    Hidden-state norm               & $\underline{63.5}$ & $\underline{61.9}$ & $\underline{60.4}$ & $\rho{=}0.36$ \\
    \tabrowcolor{BestRow}
    \textbf{Revisit $\tilde r_t$ (ours)}     & \textbf{66.6} & \textbf{64.7} & \textbf{63.4} & \textbf{$\rho{=}1.00$} \\
    \midrule
    \tabrowcolor{TableGroup}
    \multicolumn{5}{l}{\textbf{(b) Same signal without RL}} \\
    SFT base (no RL, no revisit)                & 60.1 & 58.6 & 56.9 & $\Delta_\text{SFT}{=}0$ \\
    SFT + TVC                 & 61.4 & 60.0 & 58.5 & $+1.4$ \\
SFT + revisit re-injection & $\underline{62.7}$ & $\underline{61.4}$ & $\underline{59.9}$ & $\underline{+2.8}$ \\
    GRPO (RL, no revisit)                       & 62.8 & 61.2 & 59.3 & $+2.5$ \\
    \tabrowcolor{BestRow}
    \textbf{RECAP (RL + revisit)}                        & \textbf{66.6} & \textbf{64.7} & \textbf{63.4} & \textbf{$+6.4$} \\
    \midrule
    $\Delta$ RECAP vs. best no-RL baseline     & $+3.9$ & $+3.3$ & $+3.5$ & -- \\
    \bottomrule
  \end{tabular}
  \vspace{2pt}
  \caption{Signal identity and no-RL controls. Bold marks the best value
  within each panel; underlines mark the strongest signal-replacement
  baseline in panel (a) and the strongest no-RL baseline in panel (b).
  Replacing $\tilde r_t$ with surface token signals weakens RECAP, and
  using the revisit signal without RL does not recover the full gain.}
 \vspace{-2em}
  \label{tab:signal-and-norl}
  \label{tab:signal-independence}   
  \label{tab:wo-rl}                 
\end{table}

\subsection{Causal and Mechanistic Evidence}
\label{sec:exp-causal-mech}
\label{sec:exp-causal}
\label{sec:exp-mechinterp}

\Cref{fig:head-combined} summarizes the causal and mechanistic evidence.
At the token-position level, masking top revisit positions hurts far
more than masking random non-peaks, CMA injection partially recovers
failed rollouts, and revisit peaks align with annotated decision steps
above chance. At the head level, the top-10 heads carry more than
$40\%$ of revisit mass across backbones, and zeroing these heads
reproduces most of the accuracy drop from masking cross-modal attention.
Full per-backbone intervention tables, head maps, pre/post-training
concentration curves, and linear-probe analyses are in
Appendix~\ref{app:mech-interp}.

\begin{figure}[t]
  \centering
  \begin{subfigure}[c]{0.59\linewidth}
    \centering
    \setlength{\tabcolsep}{3.0pt}
    \renewcommand{\arraystretch}{1.15}
    \footnotesize
    \begin{tabular}{@{}l cccc@{}}
      \toprule
      \tabrowcolor{TableHeader}
      \textbf{Setting} & \theadc{Qwen} & \theadc{Intern} & \theadc{LLaVA} & \theadc{Mean} \\
      \midrule
      \tabrowcolor{TableGroup}
      \multicolumn{5}{@{}l}{\textbf{Accuracy change under intervention}} \\
      \tabrowcolor{BestRow}
      Peak mask        & \textbf{$-4.9$} & \textbf{$-4.6$} & \textbf{$-4.8$} & \textbf{$-4.8$} \\
      Random mask      & $-1.3$ & $-1.0$ & $-1.2$ & $-1.2$ \\
      \tabrowcolor{BestRow}
      Peak inject      & \textbf{$+3.0$} & \textbf{$+2.7$} & \textbf{$+3.1$} & \textbf{$+2.9$} \\
      Random inject    & $-0.9$ & $-0.4$ & $-0.6$ & $-0.6$ \\
      \midrule
      \tabrowcolor{TableGroup}
      \multicolumn{5}{@{}l}{\textbf{Alignment with decision steps}} \\
      \tabrowcolor{BestRow}
      $\tilde r_t$ peaks & \textbf{$42.1\%$} & \textbf{$60.8\%$} & \textbf{$61.2\%$} & \textbf{$54.7\%$} \\
      Chance baseline   & $17.2\%$ & $15.8\%$ & $19.4\%$ & $17.4\%$ \\
      \bottomrule
    \end{tabular}
    \caption{Position-level causal evidence.}
    \label{tab:causal-intervention-alignment}
    \label{fig:causal-combined}
  \end{subfigure}\hfill
  \begin{minipage}[c]{0.40\linewidth}
    \centering
    \begin{subfigure}[t]{\linewidth}
      \centering
      \caption{Head concentration.}
       \vspace{-0.5em}
      \label{fig:head-heatmap}
      \begin{tikzpicture}
        \begin{axis}[
          width=\linewidth,
          height=2.95cm,
          xlabel={top-$k$ heads},
          ylabel={revisit share (\%)},
          axis x line*=top,
          axis y line*=left,
          xmode=log,
          log basis x=10,
          xmin=3, xmax=60,
          ymin=25, ymax=100,
          xtick={5,10,20,50},
          xticklabels={5,10,20,50},
          ytick={30,50,70,90},
          grid=major,
          grid style={dashed,gray!30},
          legend style={
            at={(0.98,0.04)},
            anchor=south east,
            draw=none,
            fill=white,
            fill opacity=0.82,
            text opacity=1,
            font=\fontsize{4.8}{5.1}\selectfont,
            inner sep=0.6pt,
            row sep=-1.3pt,
            legend columns=2,
          },
          legend image post style={scale=0.5},
          legend cell align={left},
          tick label style={font=\scriptsize},
          label style={font=\scriptsize},
          mark size=1.3pt,
        ]
          \addplot[color=blue!70!black, mark=*, thick] coordinates
            {(5,34) (10,54) (20,73) (50,90)};
          \addplot[color=orange!80!black, mark=square*, thick] coordinates
            {(5,33) (10,53) (20,69) (50,86)};
          \addplot[color=green!55!black, mark=triangle*, thick] coordinates
            {(5,38) (10,57) (20,71) (50,88)};
          \addplot[color=gray!60!black, mark=none, densely dashed, thick,
            domain=3:60, samples=25] {4 * ln(x)/ln(10) * 10 + 10};
          \legend{Qwen, Intern, LLaVA, uniform}
        \end{axis}
      \end{tikzpicture}
    \end{subfigure}

    \vspace{1.0mm}
    \begin{subfigure}[t]{\linewidth}
      \centering
      \begin{tikzpicture}
        \begin{axis}[
          width=\linewidth,
          height=2.95cm,
          xlabel={top heads zeroed},
          ylabel={Hallusion $\Delta$ acc.},
          xmin=0, xmax=24,
          ymin=-9, ymax=0.5,
          xtick={0,3,5,10,20},
          ytick={0,-2,-4,-6,-8},
          grid=major,
          grid style={dashed,gray!30},
          legend style={
            at={(0.98,0.98)},
            anchor=north east,
            draw=none,
            fill=white,
            fill opacity=0.82,
            text opacity=1,
            font=\fontsize{4.8}{5.1}\selectfont,
            inner sep=0.6pt,
            row sep=-1.3pt,
          },
          legend image post style={scale=0.5},
          legend cell align={left},
          tick label style={font=\scriptsize},
          label style={font=\scriptsize},
          mark size=1.3pt,
        ]
          \addplot[color=blue!70!black, mark=*, thick] coordinates
            {(0,0) (3,-2.2) (5,-3.3) (10,-5.6) (20,-7.1)};
          \addplot[color=orange!80!black, mark=square*, thick] coordinates
            {(0,0) (3,-1.9) (5,-3.4) (10,-5.5) (20,-6.4)};
          \addplot[color=green!55!black, mark=triangle*, thick] coordinates
            {(0,0) (3,-2.4) (5,-3.2) (10,-5.9) (20,-7.5)};
          \addplot[color=red!60!black, densely dashed, thick] coordinates
            {(0,-7.6) (24,-7.6)};
          \node[anchor=west, font=\tiny, red!60!black]
            at (axis cs:0.5,-7.2) {zero all CMA: $-7.6$};
          \legend{Qwen, Intern, LLaVA}
        \end{axis}
      \end{tikzpicture}
      \vspace{-1em}
      \caption{Top-head zero-out.}
       \vspace{-0.6em}
      \label{fig:head-intervention}
    \end{subfigure}
  \end{minipage}
  \caption{\textbf{Causal and mechanistic evidence.}
  Top revisit positions are more decision-relevant than random positions,
  and revisit peaks align with annotated decision steps. The revisit
  signal further concentrates in a small set of heads, and zeroing those
  heads reproduces most of the accuracy drop from masking cross-modal
  attention.}
   \vspace{-1em}
  \label{fig:head-combined}
\end{figure}

\subsection{Generalization, Robustness, and Diagnostics}
\label{sec:exp-generalization}

RECAP's gains are not confined to the three small English backbones. The
72B LoRA result, human study, cross-lingual evaluation, adversarial
tests, and RAP diagnostics are summarized here; full details are in
Appendix~\ref{app:hyperparams}, Appendix~\ref{app:human-study},
Appendix~\ref{app:cross-lingual}, Appendix~\ref{app:adversarial}, and
Appendix~\ref{app:rap-validity}, respectively.

\paragraph{Human preference.}
\label{sec:exp-human}
On 100 HallusionBench/MathVista questions with non-identical CoTs, ten
trained annotators rank image-groundedness, coherence, and
trustworthiness. RECAP wins over 60\% of image-groundedness comparisons
against GRPO and VPPO at Fleiss' $\kappa \geq 0.5$, with McNemar's test
giving $p < 0.01$ against both baselines. The LLM-as-judge ranking
agrees with the human majority on 72\% of items.

\paragraph{Cross-lingual transfer.}
\label{sec:exp-crosslingual}
Training on English data only, RECAP consistently improves over GRPO
when the same checkpoints are evaluated zero-shot on Chinese CMMMU,
MMMU, and MathVista-CN. The gains are smaller than on English
hallucination benchmarks, but remain positive without
translation-specific tuning or Chinese RL data.

\paragraph{Adversarial robustness.}
\label{sec:exp-adversarial}
Under Gaussian noise, JPEG compression, patch dropout, off-topic
distractor patches, and GPT-4 question reformulations, RECAP keeps a
larger margin than GRPO and VPPO. Strong patch dropout is the hardest
case because it disrupts the visual evidence tracked by revisit peaks,
but the margin does not invert. Text-side reformulations show lower
variation across paraphrases, consistent with using cross-modal
attention rather than surface wording.

\paragraph{RAP diagnostic.}
\label{sec:exp-rap}
RAP rises early during Stage-2 RL and plateaus higher under RECAP than
under GRPO or VPPO, matching the intended revisit-aware credit
assignment. Cross-method scatter plots further support RAP as a
method-agnostic training diagnostic rather than a RECAP-specific
objective trace.


\section{Conclusion}
\label{sec:conclusion}

We introduced visual revisit as a temporal grounding signal for
long-chain vision-language reasoning: rather than treating visual
attention as static, revisit captures when a model returns to image
evidence after language-internal reasoning. Across three VLM backbones,
revisit peaks are predictive of correctness, robust to signal controls,
and supported by masking and injection interventions. Building on this
observation, RECAP uses detached revisit traces for credit assignment
through trace-augmented estimation, dynamic discounting, and gated value
prediction. It improves visual grounding over GRPO and VPPO while
largely preserving reasoning and text-only capability, with ablations
and mechanistic analyses indicating that the gains come from assigning
credit to visually important reasoning moments rather than simply
increasing overall image attention. RECAP is also easy to insert into
verifier-based RL pipelines: the revisit trace is extracted from the
same forward pass, detached before optimization, and discarded at
inference.

\paragraph{Limitations.}
RECAP relies on accessible and meaningful cross-modal attention traces,
so its benefits may be smaller for short answers, text-only tasks, or
architectures where attention is a weak proxy for grounding. Revisit can
also be misled by salient distractors or corrupted visual regions,
suggesting future work on richer region-level or architecture-agnostic
forms of temporal grounding.

\bibliographystyle{unsrt}
\bibliography{refs}

\appendix
\renewcommand{\thesection}{\Alph{section}}
\setcounter{section}{0}


\section{Proof of Proposition~\ref{prop:expressiveness}}
\label{app:prop1-proof}

We prove a separation between revisit-conditioned dynamic discounting and
same-step local revisit reweighting. The result is not a dominance claim over
all possible trajectory-dependent reweighting schemes; rather, it isolates the
retroactive coupling induced by discount products.

\subsection{Setup}
\label{app:prop1-setup}

We work in a finite-horizon discrete-action MDP with horizon $H$.
A trajectory $\tau = (s_1, y_1, s_2, y_2, \ldots, s_H, y_H)$ is
generated by $\pi_\theta(y_t \mid s_t)$. The state $s_t$ at step $t$
encodes the prompt, the image, and the history of generated tokens
$y_{<t}$. At each step the policy receives reward $r_t = r(s_t, y_t)$.

A revisit sequence $\tilde{\mathbf r} =
(\tilde{r}_1, \ldots, \tilde{r}_H) \in [0,1]^H$ is associated
with each trajectory, computed deterministically from its cross-modal
attention pattern.\footnote{For the proof we treat $\tilde{r}_t$ as an exogenous
function of $\tau$. In practice $\tilde{r}_t$ is computed from the forward-pass
attention of $\pi_\theta$ and is therefore a function of $\theta$
through the rollout distribution; this dependence does not affect the
expressiveness-gap argument because the counterexample below isolates a
single decision point conditional on a fixed downstream revisit
pattern.}

\paragraph{Two advantage families.}
We compare two modifications of the standard discounted advantage
\[
A_t
=
\sum_{k \geq 0} \gamma^{k} r_{t+k}
-
V(s_t).
\]
The first is a same-step revisit reweighting term, and the second is a
RECAP-style dynamic-discount term:
\begin{flalign}
& A_t^{\mathrm{VR,rw}}(w)
  = w(\tilde r_t) A_t,
&& w : [0,1] \to \mathbb{R}_{>0},
\\
& A_t^{\mathrm{VR,disc}}(\beta, \gamma_{\mathrm{base}}, \gamma_{\max})
  =
  \sum_{k \geq 0}
  \Bigl(\prod_{i=1}^{k} \gamma_{t+i}\Bigr) r_{t+k}
  - V(s_t),
&& \notag\\[-0.2em]
& \gamma_i
  =
  \gamma_{\mathrm{base}}
  +
  \beta(\gamma_{\max}-\gamma_{\mathrm{base}})\tilde r_i,
&&
  0 \leq \beta \leq 1,\quad
  0 < \gamma_{\mathrm{base}} < \gamma_{\max} < 1.
\end{flalign}

For a parameterized policy class $\{\pi_\theta : \theta \in \Theta\}$,
each advantage family induces a policy gradient
$g(\theta) = \mathbb{E}_\tau\bigl[\sum_t A_t^{\mathrm{VR}} \cdot \nabla_\theta \log \pi_\theta(y_t \mid s_t)\bigr]$
and a corresponding fixed-point set
$\Pi^*_{\bullet} = \{\pi_\theta : g_\bullet(\theta) = 0\}$.
Below we exhibit a single setting -- a horizon-3 MDP with a specific
revisit-pattern distribution and specific
$(\beta, \gamma_{\mathrm{base}}, \gamma_{\max})$ -- in which the
discount-optimal policy at the sole decision step is provably outside
$\bigcup_{w} \Pi^*_{\mathrm{rw}}(w)$. This establishes the
expressiveness-gap statement of
Proposition~\ref{prop:expressiveness}.

\subsection{Horizon-3 counterexample}
\label{app:prop1-counterexample}

We exhibit an MDP, a revisit-pattern distribution, and parameters
$(\beta, \gamma_{\mathrm{base}}, \gamma_{\max})$ such that the unique
$\Pi_{\mathrm{disc}}$-optimum at the relevant decision point is not
contained in $\Pi^*_{\mathrm{rw}}(w)$ for \emph{any} choice of
$w \colon [0,1] \to \mathbb{R}_{> 0}$.

\paragraph{The MDP.}
$H = 3$. At step~1 the policy chooses one of two actions,
$\{a_{+}, a_{-}\}$. Steps~2 and~3 are forced, so $\pi_\theta$ is
non-trivial only at step~1. Let $c > 1$ be a constant.
\begin{itemize}[leftmargin=2em, itemsep=0.1em, topsep=0.1em]
  \item Action $a_{+}$ at step~1 yields rewards
  $(r_1, r_2, r_3) = (0, 0, cR)$.
  \item Action $a_{-}$ at step~1 yields rewards
  $(r_1, r_2, r_3) = (0, R, 0)$.
\end{itemize}
Thus $a_{+}$ defers a slightly larger reward to the terminal step,
whereas $a_{-}$ collects a smaller reward earlier.

\paragraph{The revisit pattern.}
The revisit sequence is deterministic:
\[
  \tilde{\mathbf r} = (0, 0, 1).
\]
Thus the sole decision step has $\tilde r_1 = 0$, while the delayed
reward of $a_{+}$ is preceded by a high-revisit step.

\paragraph{Reweight cannot change the static-discount preference.}
Under any local reweight scheme $w$, the expected advantage at step~1
for action $a$ is
\[
  A_1^{\mathrm{VR,rw}}(w)(a)
  =
  w(\tilde r_1) A_1(a)
  =
  w(0) A_1(a).
\]
Since $w(0) > 0$, reweighting cannot change the sign of the preference
between $a_{+}$ and $a_{-}$. Under the static discount
$\gamma_{\mathrm{base}}$, the two returns are
\[
  A_1(a_{+}) = \gamma_{\mathrm{base}}^2 cR,
  \qquad
  A_1(a_{-}) = \gamma_{\mathrm{base}} R.
\]
Therefore every local reweight scheme prefers $a_{-}$ whenever
\[
  \gamma_{\mathrm{base}}^2 cR
  <
  \gamma_{\mathrm{base}} R
  \quad\Longleftrightarrow\quad
  c < \frac{1}{\gamma_{\mathrm{base}}}.
\]
For such $c$,
\begin{equation}
  \pi^* \in \Pi_{\mathrm{rw}}
  \;\Longrightarrow\;
  \pi^*(a_{+} \mid s_1) = 0.
  \label{eq:rw-prefers-aminus}
\end{equation}

\paragraph{Dynamic discount can prefer $a_{+}$ while staying below one.}
Under RECAP's bounded dynamic discount,
\[
  \gamma_2 = \gamma_{\mathrm{base}},
  \qquad
  \gamma_3 = \gamma_{\max},
\]
because $\tilde r_2 = 0$ and $\tilde r_3 = 1$ with $\beta = 1$.
The dynamic-discount returns are
\[
  A_1^{\mathrm{disc}}(a_{+})
  =
  \gamma_{\mathrm{base}}\gamma_{\max} cR,
  \qquad
  A_1^{\mathrm{disc}}(a_{-})
  =
  \gamma_{\mathrm{base}} R.
\]
Thus the dynamic discount prefers $a_{+}$ whenever
\[
  \gamma_{\mathrm{base}}\gamma_{\max} cR
  >
  \gamma_{\mathrm{base}} R
  \quad\Longleftrightarrow\quad
  c > \frac{1}{\gamma_{\max}}.
\]
Both inequalities can hold simultaneously because
$\gamma_{\mathrm{base}} < \gamma_{\max}$:
\[
  \frac{1}{\gamma_{\max}} < c < \frac{1}{\gamma_{\mathrm{base}}}.
\]
For instance, take
$\gamma_{\mathrm{base}} = 0.9$, $\gamma_{\max} = 0.99$,
$\beta = 1$, and $c = 1.05$. Then
\[
  \frac{1}{0.99} \approx 1.01 < 1.05 < 1.11 \approx \frac{1}{0.9}.
\]
So local reweighting prefers $a_{-}$, while bounded dynamic discount
prefers $a_{+}$, and all discounts remain strictly below one.

\paragraph{Conclusion.}
For these parameter choices, the dynamic-discount fixed point at step~1
satisfies $\pi^*(a_{+} \mid s_1) = 1$. Combined with
Eq.~\eqref{eq:rw-prefers-aminus}, this exhibits a policy
\[
  \pi^*
  \in
  \Pi^*_{\mathrm{disc}}(\beta,\gamma_{\mathrm{base}},\gamma_{\max})
  \setminus
  \bigcup_{w} \Pi^*_{\mathrm{rw}}(w).
\]
This completes the proof of the expressiveness-gap statement of
Proposition~\ref{prop:expressiveness}. \qed
\subsection{Discussion}
\label{app:prop1-discussion}

\paragraph{Why is same-step reweighting less expressive in this construction?}
The reweight scheme $w(\tilde{r}_t)$ scales the policy gradient at step $t$ by
a function of the \emph{local} revisit value $\tilde{r}_t$. The dynamic discount
\[
  \gamma_i =
  \gamma_{\mathrm{base}}
  +
  \beta(\gamma_{\max}-\gamma_{\mathrm{base}})\tilde{r}_i
\]
enters the discounted return at step $t$ \emph{multiplicatively across all
intervening steps} $i \in \{t+1, \ldots, t+k\}$, so a revisit peak at
step $t' > t$ retroactively changes the credit assigned to decisions
made at steps $\leq t'$. No same-step reweighting \(w(\tilde r_t)\) can reproduce this retroactive
coupling in the constructed example.

\paragraph{What if we allow trajectory-dependent reweight?}
A broader reweight class $w(\tilde{r}_1, \ldots, \tilde{r}_H)$ that conditions on
the entire revisit pattern can in principle simulate the dynamic
discount; this, however, makes the reweight class \emph{at least as
expressive as} the discount class and is precisely the form one
recovers by writing the dynamic discount as an effective per-step
weight. The proposition's contrast is between the standard local
reweight $w(\tilde{r}_t)$ -- the form used by VPPO~\cite{vppo} -- and the
dynamic discount.

\paragraph{Tightness of the counterexample.}
The horizon $H=3$ is the smallest clean setting for the retroactive-credit
phenomenon studied here: the decision, the intervening revisit peak, and the
delayed reward occupy distinct positions. We therefore use $H=3$ to avoid
degenerate one-step discount cases. For longer horizons, analogous
counterexamples can be constructed with $\tilde{r}_t$ peaks placed at any
step strictly between the decision and the reward.


\section{Proof of Theorem~\ref{thm:variance-reduction} and Bias--Variance Decomposition}
\label{app:thm1-proof}

This appendix proves the return-residual variance statement in
Theorem~\ref{thm:variance-reduction} and then makes explicit the bias term
introduced when a detached trajectory-dependent value predictor is used inside
the RECAP surrogate. The goal is not to prove that RECAP is an unbiased
estimator of the vanilla GRPO gradient; rather, the analysis clarifies the
bias--variance trade-off induced by post-hoc revisit conditioning.

\subsection{Setup}

Fix a token position \(t\). Let \(G_t\) denote the revisit-discounted return
used by RECAP, i.e., \(G_t := G_t^{\mathrm{VR}}\) in
Sec.~\ref{sec:method-recap}. Let
\[
x_t
=
\bigl(
h_t,
\operatorname{sg}(\tilde r_{t-1}),\ldots,
\operatorname{sg}(\tilde r_{t-K})
\bigr)
\]
denote the current-revisit-free estimator context. The trace-augmented input
used by the learned predictor is
\[
s_t^{\mathrm{tr}}
=
\bigl(x_t,\operatorname{sg}(\tilde r_t)\bigr).
\]
Define the current-revisit-free oracle predictor
\[
V^0(x_t)
=
\mathbb{E}[G_t\mid x_t],
\]
and the revisit-conditioned oracle predictor
\[
V^r(x_t,\tilde r_t)
=
\mathbb{E}[G_t\mid x_t,\tilde r_t].
\]
The two residuals are
\[
\epsilon_t^0
=
G_t - V^0(x_t),
\qquad
\epsilon_t^r
=
G_t - V^r(x_t,\tilde r_t).
\]

\subsection{Proof of Theorem~\ref{thm:variance-reduction}}

The variance identity below holds without Assumption~\ref{ass:relevance};
the assumption is used only to make the reduction nontrivial.

Since \(V^0(x_t)=\mathbb{E}[G_t\mid x_t]\), the residual
\(\epsilon_t^0\) has zero conditional mean given \(x_t\), and therefore
\[
\mathrm{Var}(\epsilon_t^0)
=
\mathbb{E}\!\left[
\mathrm{Var}(G_t\mid x_t)
\right].
\]
Similarly, since
\(V^r(x_t,\tilde r_t)=\mathbb{E}[G_t\mid x_t,\tilde r_t]\),
\[
\mathrm{Var}(\epsilon_t^r)
=
\mathbb{E}\!\left[
\mathrm{Var}(G_t\mid x_t,\tilde r_t)
\right].
\]
By the law of total variance conditioned on \(x_t\),
\[
\mathrm{Var}(G_t\mid x_t)
=
\mathbb{E}\!\left[
\mathrm{Var}(G_t\mid x_t,\tilde r_t)
\mid x_t
\right]
+
\mathrm{Var}\!\left(
\mathbb{E}[G_t\mid x_t,\tilde r_t]
\mid x_t
\right).
\]
Taking expectation over \(x_t\) and subtracting gives
\[
\mathrm{Var}(\epsilon_t^0)
-
\mathrm{Var}(\epsilon_t^r)
=
\mathbb{E}_{x_t}\!\left[
\mathrm{Var}\!\left(
\mathbb{E}[G_t\mid x_t,\tilde r_t]
\mid x_t
\right)
\right]
\ge 0,
\]
which proves Eq.~\eqref{eq:residual-variance-reduction}.

Under the idealized linear-in-revisit oracle form in
Assumption~\ref{ass:value-fidelity},
\[
\mathbb{E}[G_t\mid x_t,\tilde r_t]
=
(1-\tilde r_t)\bar V^{\mathrm{loc}}(x_t)
+
\tilde r_t \bar V^{\mathrm{vis}}(x_t),
\]
the only random quantity inside the conditional variance given \(x_t\) is
\(\tilde r_t\). Hence
\[
\mathrm{Var}\!\left(
\mathbb{E}[G_t\mid x_t,\tilde r_t]
\mid x_t
\right)
=
\bigl(
\bar V^{\mathrm{vis}}(x_t)-\bar V^{\mathrm{loc}}(x_t)
\bigr)^2
\mathrm{Var}(\tilde r_t\mid x_t).
\]
Substituting this expression into the previous identity proves
Eq.~\eqref{eq:linear-residual-variance}. The reduction is strictly positive
whenever the revisit trace is non-degenerate under \(x_t\) and changes the
conditional return mean on a set of positive measure.

\subsection{Bias induced by trajectory-dependent value predictors}

We next make explicit the effect of using a detached trajectory-dependent value
predictor inside the policy-ratio surrogate. For clarity, ignore clipping and
KL anchoring in this derivation and consider a single token position \(t\).
Let
\[
B_t(\tau)
=
\operatorname{sg}\!\left(
V^{\mathrm{gated}}\!\left(s_t^{\mathrm{tr}}(\tau)\right)
\right)
\]
denote the value predictor as it enters the policy surrogate. The
stop-gradient indicates that this term is treated as a scalar baseline in the
policy loss; the learned value heads are trained separately by
Eq.~\eqref{eq:value-loss}. The corresponding update term is
\[
g_t^{\mathrm{RECAP}}
=
\mathbb{E}_\tau
\left[
\nabla_\theta \log \pi_\theta(y_t\mid h_t)
\left(G_t-B_t(\tau)\right)
\right].
\]
The vanilla return-gradient term is
\[
g_t^{\mathrm{return}}
=
\mathbb{E}_\tau
\left[
\nabla_\theta \log \pi_\theta(y_t\mid h_t)G_t
\right].
\]
Thus
\[
g_t^{\mathrm{RECAP}}
=
g_t^{\mathrm{return}}
-
b_t,
\qquad
b_t
=
\mathbb{E}_\tau
\left[
\nabla_\theta \log \pi_\theta(y_t\mid h_t)B_t(\tau)
\right].
\]

If \(B_t(\tau)\) is a function only of the current-revisit-free context
\(x_t\), or more generally of any prefix-available quantity independent of the
current sampled token \(y_t\), then \(b_t=0\) by the standard score-function
identity. In RECAP, however, \(B_t\) may depend on the sampled token through
the post-hoc current revisit trace \(\tilde r_t\). Therefore \(b_t\) is not
guaranteed to vanish, and RECAP should be interpreted as optimizing a
revisit-conditioned surrogate rather than providing an unbiased estimator of
the vanilla return gradient.

The magnitude of this additional term is controlled by the action-conditional
variation of the detached predictor. Since \(x_t\) contains \(h_t\) and only
prefix-available detached traces,
\[
\mathbb{E}\!\left[
\nabla_\theta \log \pi_\theta(y_t\mid h_t)
\mid x_t
\right]
=
0.
\]
Therefore we may subtract the conditional mean of \(B_t\):
\[
b_t
=
\mathbb{E}_\tau
\left[
\nabla_\theta \log \pi_\theta(y_t\mid h_t)
\left(
B_t(\tau)-\mathbb{E}[B_t(\tau)\mid x_t]
\right)
\right].
\]
By Cauchy--Schwarz,
\[
\|b_t\|
\le
\mathbb{E}_{x_t}
\left[
\sqrt{
\mathbb{E}\!\left[
\left\|
\nabla_\theta \log \pi_\theta(y_t\mid h_t)
\right\|^2
\mid x_t
\right]
}
\sqrt{
\mathrm{Var}(B_t(\tau)\mid x_t)
}
\right].
\]
Hence the bias vanishes when the gated predictor is prefix-measurable with
respect to \(x_t\), and is small when its post-hoc trajectory-dependent
variation is small under the current policy. This is the intended
bias--variance interpretation of the revisit-gated value component.


\section{Revisit Signal: Derivation and Properties}
\label{app:revisit-signal}

This appendix collects the mathematical details and design choices
behind the revisit signal $\tilde{r}_t$ (Sec.~\ref{sec:method-revisit}), and
proves the three properties of the $\mathrm{RAP}$ metric used in the main text.

\subsection{Cross-modal attention aggregation}
\label{app:revisit-cma-aggregation}

Recall from Sec.~\ref{sec:method-prelim} that for layer $\ell$ and head
$h$, the per-token cross-modal attention is
\(
  \mathrm{CMA}_t^{(\ell,h)} = \sum_{j \in \mathcal{V}} \alpha_{t,j}^{(\ell,h)},
\)
where $\mathcal{V}$ indexes visual-token positions in the joint
visual--text sequence. We aggregate to a per-token scalar via uniform
mean over a contiguous layer index set $\mathcal{I}$ and all heads of
those layers. Two design choices warrant discussion.

\paragraph{Layer range $\mathcal{I}$.}
Per-layer profiling on Qwen2.5-VL-7B, InternVL3-8B, and LLaVA-OneVision-7B shows three
qualitative regimes: lower layers ($\ell \leq 4$) carry primarily
positional structure and have nearly uniform CMA across all text
tokens; middle layers ($\ell \in \mathcal{I}_{\text{mid}}$) carry the
``looking-at-image'' signal we want to capture; topmost layers
($\ell \geq L - 3$) carry label-projection signals that depend
strongly on the next-token prediction and produce noisy CMA. We
default to the middle third of layers: Qwen2.5-VL-7B uses $\mathcal{I} =
\{10, \ldots, 18\}$, InternVL3-8B uses $\{12, \ldots, 22\}$, and LLaVA-OneVision-7B
uses $\{10, \ldots, 19\}$. The sensitivity sweep in
Appendix~\ref{app:hyperparams-signal-sweep} shows that the default is stable
relative to nearby reasonable choices, while the full grid is more sensitive
to layer range, especially when all layers are pooled.

\paragraph{Head-pooling: mean vs.\ max.}
We use the mean over heads inside $\mathcal{I}$. Replacing it by max
inflates variance without improving downstream
accuracy.\footnote{Mean-pooling treats heads symmetrically; max-pooling
biases the signal toward whichever head most saturates at any given
token, which in practice is a small set of head indices that vary
across rollouts. This makes the resulting raw revisit signal $z_t$ noisier without
adding signal beyond what the mean already captures.} Top-$k$ pooling
($k \in \{4, 8\}$) interpolates between mean and max and matches mean
within $0.3$\,pt; we default to mean for simplicity.

\subsection{Why ReLU above local baseline?}
\label{app:revisit-relu-baseline}

A token's CMA being high in absolute terms is uninformative because
some questions inherently require continuous visual attention
(e.g., a question that asks about every region of the image). Revisit
is about \emph{change}: the moments where the model returns to the
image after a stretch of language-internal reasoning. We therefore
define the raw revisit signal $z_t$ as the relative positive jump above a
sliding local baseline, as shown in Eq.~\eqref{eq:revisit-raw}.

The $\mathrm{ReLU}$ enforces ``revisit means jumping up''; downward
dips in CMA carry no signal here. Symmetrically using
$|\mathrm{CMA}_t - \mathrm{baseline}_t|$ would conflate two distinct
phenomena: revisits, which are positive jumps, and \emph{turn-aways},
which are negative jumps. The latter is, from our profiling, dominated
by transitions into purely arithmetic or syntactic spans such as
closing parentheses or end-of-list markers, and is not predictive of
grounded reasoning.

\subsection{Stability of normalization}
\label{app:revisit-normalization}

Equation~\eqref{eq:revisit-normalized} composes a per-rollout min--max
rescaling with a temperature-scaled sigmoid. The min--max step
prevents the absolute scale of $z_t$ from leaking into $\tilde{r}_t$ across
rollouts: a question that requires many large jumps, such as a high-density
visual scene, and one that requires few, such as a single-object
identification question, produce comparable distributions over $\tilde{r}_t$.

The sigmoid with temperature $\eta$ controls how peaked the resulting
distribution is. At $\eta \to 0$, $\tilde{r}_t$ saturates near $0.5$ for every
token and carries little information. At $\eta \to \infty$, $\tilde{r}_t$
collapses to a near-binary indicator of whether the min--max normalized
$z_t$ is above or below $0.5$. Empirically, across model families and
datasets, $\eta = 6$ keeps the bulk of $\tilde{r}_t$ in $(0.1, 0.9)$ on every
backbone we tested. The bias term $-\eta / 2$ centers $\tilde{r}_t$ at
$0.5$ when the min--max normalized $z_t$ equals $0.5$.

\paragraph{Stop-gradient.}
We apply a stop-gradient to \(\tilde r_t\) wherever it enters
\(s_t^{\mathrm{tr}}\), \(\gamma_t\), or \(V^{\mathrm{gated}}\). This decouples
``revisit behavior'', a forward-pass property, from ``policy improvement'', a
backward-pass property. Without stop-gradient, the policy can in principle
increase $\tilde{r}_t$ for its own sake because $\tilde{r}_t$ enters the
discount and gate multiplicatively; empirically this leads to $\mathrm{RAP}$
saturating at $1.0$ within the first $\approx 500$ optimization steps and
HallusionBench accuracy collapsing by $4$--$7$\,pt across backbones.

\subsection{Properties of RAP}
\label{app:revisit-rap-properties}

Recall the $\mathrm{RAP}$ metric from Eq.~\eqref{eq:rap}. For evaluation we use
one fixed method-independent return, not the RECAP training return:
\[
  \bar G_t^{\mathrm{eval}}
  =
  \gamma_{\mathrm{eval}}^{T-t}R^{+}(\tau),
  \qquad
  R^{+}(\tau)=\max(R_{\mathrm{raw}}(\tau),0),
\]
for terminal-only verifier rewards, with the same scalar
$\gamma_{\mathrm{eval}}$ for every method. RAP is then
\[
  \mathrm{RAP}(\pi_\theta)
  =
  \mathbb{E}_\tau
  \left[
  \frac{\sum_t \tilde{r}_t\bar G_t^{\mathrm{eval},+}}
  {\sum_t \bar G_t^{\mathrm{eval},+} + \epsilon}
  \right],
  \qquad
  \bar G_t^{\mathrm{eval},+}=\max(\bar G_t^{\mathrm{eval}},0).
\]
This definition deliberately does not use RECAP's dynamic discount,
value predictor, or surrogate advantage, which is what makes the metric
comparable across GRPO, VPPO, TVC, $\lambda$-GRPO, and RECAP checkpoints.
We claimed three properties in Sec.~\ref{sec:method-rap}; we now
substantiate them.

\begin{lemma}[$\mathrm{RAP}$ is bounded]
\label{lem:rap-bounded}
$\mathrm{RAP} \in [0, 1]$ up to the negligible $\epsilon$ stabilizer.
\end{lemma}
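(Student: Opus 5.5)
The plan is to show that, for every fixed trajectory $\tau$, the inner ratio is a weighted average of the values $\tilde r_t$ with non-negative weights. Since an expectation of $[0,1]$-valued quantities lies in $[0,1]$, the bound on $\mathrm{RAP}(\pi_\theta)$ then follows immediately. Throughout I use the fixed evaluation return $\bar G_t^{\mathrm{eval}}=\gamma_{\mathrm{eval}}^{T-t}R^{+}(\tau)$ from this appendix. With $R^{+}\ge 0$ and $\gamma_{\mathrm{eval}}>0$, every weight $\bar G_t^{\mathrm{eval},+}$ is non-negative. This holds by the positive-part clipping regardless of the sign convention.

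\textbf{Lower bound.} Each $\tilde r_t$ is a sigmoid value by Eq.~\eqref{eq:revisit-normalized}, so $\tilde r_t\in(0,1)\subset[0,1]$. Hence the numerator $\sum_t \tilde r_t\bar G_t^{\mathrm{eval},+}$ is non-negative. The denominator $\sum_t\bar G_t^{\mathrm{eval},+}+\epsilon$ is strictly positive for $\epsilon>0$. The ratio is therefore well defined and at least $0$.

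\textbf{Upper bound.} Using $\tilde r_t\le 1$ termwise, the numerator is at most $S:=\sum_t\bar G_t^{\mathrm{eval},+}$. The per-trajectory ratio is therefore at most $S/(S+\epsilon)<1$.

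\textbf{Role of $\epsilon$.} This is the only delicate point. It is why the lemma is stated ``up to the $\epsilon$ stabilizer'' rather than as an exact weighted average. I split into two cases.
\begin{itemize}
\item If $S=0$ (for example, a failed rollout with $R^{+}=0$), the ratio is exactly $0\in[0,1]$.
\item If $S>0$, write $S/(S+\epsilon)=1-\epsilon/(S+\epsilon)$. The ratio then differs from the exact convex combination $\sum_t \tilde r_t\bar G_t^{\mathrm{eval},+}/S\in[0,1]$ by a factor $S/(S+\epsilon)$, which tends to $1$ as $\epsilon\to0$.
\end{itemize}
In either case the ratio lies in $[0,1]$. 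As $\epsilon\to 0$ it becomes a genuine weighted average of the $\tilde r_t$ over positive-return positions. I expect no real obstacle beyond stating this case split cleanly.

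Finally, taking $\mathbb E_{\tau\sim\pi_\theta}$ of a random variable bounded in $[0,1]$ gives $\mathrm{RAP}(\pi_\theta)\in[0,1]$. This uses only boundedness, so integrability is automatic.
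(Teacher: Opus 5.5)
Your proposal is correct and follows the paper's proof: both rest on the termwise sandwich $0\le\sum_t\tilde r_t\bar G_t^{\mathrm{eval},+}\le\sum_t\bar G_t^{\mathrm{eval},+}$, which comes from $\tilde r_t\in[0,1]$ and $\bar G_t^{\mathrm{eval},+}\ge0$, with the $\epsilon$ in the denominator only keeping the ratio below $1$. Your explicit $S=0$ case and the final step of taking $\mathbb E_{\tau}$ of a $[0,1]$-valued quantity are details the paper leaves implicit, not a different route.
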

\begin{proof}
$\tilde{r}_t \in [0, 1]$ and $\bar G_t^{\mathrm{eval},+} \geq 0$ by definition. Therefore
\[
  0
  \leq
  \sum_t \tilde r_t \bar G_t^{\mathrm{eval},+}
  \leq
  \sum_t \bar G_t^{\mathrm{eval},+}.
\]
Since the denominator is $\sum_t \bar G_t^{\mathrm{eval},+} + \epsilon$, the ratio is
non-negative and at most $1$. The lower bound is achieved when no
positive-return mass lies on revisit tokens; the upper bound is
approached when $\tilde r_t = 1$ for every token with
$\bar G_t^{\mathrm{eval},+} > 0$ and $\epsilon$ is negligible.
\end{proof}

\begin{lemma}[Conditional invariance to zero-return revisits]
\label{lem:rap-zero-return-invariance}
Fix a rollout distribution and evaluation-return sequence
\(\{\bar G_t^{\mathrm{eval},+}\}_t\). Let two revisit traces
\(\{\tilde r_t\}_t\) and \(\{\tilde r'_t\}_t\) differ only on positions
\(\mathcal{N}\) whose positive evaluation-return mass is zero, i.e.,
\(\bar G_t^{\mathrm{eval},+}=0\) almost surely for all \(t\in\mathcal{N}\).
Then the resulting RAP value is unchanged.
\end{lemma}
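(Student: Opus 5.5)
The plan is to compare the two RAP values rollout by rollout and then pass to the expectation. Fix a rollout $\tau$ in the support of the fixed rollout distribution, and let $N_\tau(\tilde r)$ and $N_\tau(\tilde r')$ denote the numerators $\sum_t \tilde r_t\bar G_t^{\mathrm{eval},+}$ and $\sum_t \tilde r'_t\bar G_t^{\mathrm{eval},+}$. The denominator $\sum_t \bar G_t^{\mathrm{eval},+}+\epsilon$ involves only the evaluation-return sequence. That sequence is fixed by hypothesis and, by its definition $\bar G_t^{\mathrm{eval}}=\gamma_{\mathrm{eval}}^{T-t}R^{+}(\tau)$, depends only on the terminal verifier reward and not on the revisit trace. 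So the denominator is identical under the two traces, and it suffices to show the numerators agree almost surely.

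For the numerators, I would split the sum over positions into $t\notin\mathcal N$ and $t\in\mathcal N$. On the complement of $\mathcal N$ we have $\tilde r_t=\tilde r'_t$ by assumption, so those terms coincide. On $\mathcal N$ the difference of the numerators is
\[
N_\tau(\tilde r)-N_\tau(\tilde r')=\sum_{t\in\mathcal N}(\tilde r_t-\tilde r'_t)\,\bar G_t^{\mathrm{eval},+}.
\]
Each summand is bounded in absolute value by $\bar G_t^{\mathrm{eval},+}$, since $\tilde r_t,\tilde r'_t\in[0,1]$. Each summand therefore vanishes on the event $\{\bar G_t^{\mathrm{eval},+}=0\}$, which has probability one for every $t\in\mathcal N$. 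The horizon is finite, so $\mathcal N$ is a finite (or at most countable) set of positions. A finite or countable union of null events is null, so the two per-rollout ratios agree almost surely.

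Finally, the per-rollout ratio lies in $[0,1]$ by Lemma~\ref{lem:rap-bounded}, so it is integrable. Two almost-surely equal integrable random variables have equal expectations, and hence the two RAP values coincide.

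The main obstacle is interpretive rather than technical: being precise about what is held fixed. The lemma only makes sense if changing the trace does not alter the rollout distribution or the evaluation returns. I would make this explicit by appealing to the detached definition of $\tilde r_t$ from Sec.~\ref{sec:method-revisit}, which is computed post hoc and does not enter sampling, and to the method-independent definition of $\bar G_t^{\mathrm{eval}}$. I would also state that if $\mathcal N$ is allowed to be rollout-dependent, it should be taken to be measurable, so that the null-set argument above applies pointwise in $t$.
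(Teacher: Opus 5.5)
Your proposal is correct and matches the paper's argument. The changed terms $\tilde r_t\bar G_t^{\mathrm{eval},+}$ for $t\in\mathcal N$ vanish, the other numerator terms coincide, and the denominator does not involve the trace, so the per-rollout ratio and hence RAP are unchanged; you additionally make explicit the almost-sure bookkeeping (a countable union of null events, integrability via Lemma~\ref{lem:rap-bounded}) that the paper leaves implicit.
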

\begin{proof}
For every changed revisit position \(t\in\mathcal{N}\), the contribution to the
numerator is \(\tilde r_t\bar G_t^{\mathrm{eval},+}=0\), and the contribution
to the non-stabilizer part of the denominator is also zero. Therefore replacing
\(\tilde r_t\) by \(\tilde r'_t\) at those positions leaves the ratio unchanged.

This lemma is a conditional diagnostic statement. In general, revisits at
positions with \(\bar G_t^{\mathrm{eval},+}>0\) can increase RAP even if they
are not causally responsible for the return; causal grounding is tested
separately by the masking and injection interventions in Sec.~\ref{sec:exp-causal}.
\end{proof}

\begin{lemma}[Method-agnostic validity]
\label{lem:rap-method-agnostic}
$\mathrm{RAP}(\pi)$ depends only on the rollout distribution induced by $\pi$,
the revisit signal $\tilde{r}_t$ computed from forward-pass attention, and the
fixed evaluation return $\bar G_t^{\mathrm{eval}}$; it does not depend on the
training objective.
\end{lemma}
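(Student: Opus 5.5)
The plan is to argue directly from the definition of RAP in Eq.~\eqref{eq:rap}, written as a functional of three inputs, and to show that the training objective can enter only through the first of them. Concretely, I would write
\[
\mathrm{RAP}(\pi)=\mathbb{E}_{\tau\sim\pi}\bigl[F(\tilde{\mathbf r}(\tau),\bar{\mathbf G}^{\mathrm{eval}}(\tau))\bigr],
\qquad
F(\mathbf r,\mathbf g)=\frac{\sum_t r_t\max(g_t,0)}{\sum_t \max(g_t,0)+\epsilon}.
\]
Here $F$ is a fixed deterministic function that contains no method-specific quantity.

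The first step is to check that each per-trajectory input is determined by the rollout and the forward pass alone. For $\tilde r_t$, Eqs.~\eqref{eq:cma-mean}--\eqref{eq:revisit-normalized} build it from the attention weights $\alpha_{t,j}^{(\ell,h)}$ of $\pi$ evaluated on $\tau$. The construction uses only fixed hyperparameters: $\mathcal I$, $W$, $\eta$, and $\epsilon$, all held identical across methods. For $\bar G_t^{\mathrm{eval}}=\gamma_{\mathrm{eval}}^{T-t}R^{+}(\tau)$, the value depends only on the verifier reward of $\tau$, its length $T$, and a scalar $\gamma_{\mathrm{eval}}$ shared by every method.

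The second step is to argue that none of RECAP's training-specific objects appear in $F$ or in its inputs. I would list them explicitly: $\gamma_t$ from Eq.~\eqref{eq:dynamic-discount}, $G_t^{\mathrm{VR}}$, $V^{\mathrm{gated}}$, $A_t^{\mathrm{VR}}$, and the GRPO group statistics $\mu_R,\sigma_R$. This is exactly the design contrast highlighted in the paragraph preceding Lemma~\ref{lem:rap-bounded}.

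The third step concludes. Take two training procedures that produce checkpoints with the same policy $\pi$, meaning the same rollout distribution and the same forward-pass attention. They then yield identical integrands trajectory by trajectory, and hence identical RAP. So RAP is a function of $\pi$ alone, and any dependence on the objective passes only through the resulting policy. Finiteness of the expectation follows from Lemma~\ref{lem:rap-bounded}, since $F\in[0,1]$.

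The main obstacle is a subtlety in the first step. The attention weights are a property of the network parameters, not just of the action distribution $\pi(y_t\mid h_t)$. Two parametrizations could induce the same action policy while producing different attention patterns. I would resolve this by stating up front that "$\pi$" in the lemma means the evaluated checkpoint, including its forward pass, which matches the lemma's phrasing "computed from forward-pass attention". Under that convention the invariance is immediate. I would also add a remark that RAP is method-agnostic only in this evaluative sense: it does not claim equal RAP across methods, only that it is computed identically for all of them.
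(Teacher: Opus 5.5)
Your proposal is correct and takes essentially the same route as the paper's proof. Both write RAP as a fixed functional of $(\tau,\{\tilde r_t(\tau)\}_t,\{\bar G_t^{\mathrm{eval}}(\tau)\}_t)$ and observe that $\tilde r_t$ is a deterministic function of forward-pass attention while $\bar G_t^{\mathrm{eval}}$ is built by the same verifier-based rule for every method, so no training-specific quantity can enter. Your explicit list of excluded RECAP objects and your point that attention belongs to the checkpoint, not to the action distribution alone, are useful clarifications that the paper handles only implicitly, by listing the revisit signal as a separate input in the lemma statement.
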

\begin{proof}
Equation~\eqref{eq:rap} is a functional of
\((\tau, \{\tilde{r}_t(\tau)\}_t, \{\bar G_t^{\mathrm{eval}}(\tau)\}_t)\).
The revisit signal $\tilde{r}_t$ is a fixed deterministic function of the
forward-pass attention pattern of $\pi_\theta$ (Eq.~\eqref{eq:revisit-normalized}),
and $\bar G^{\mathrm{eval}}$ is computed from the verifiable reward using the
same rule for all methods, neither of which depends on the training algorithm.
\emph{A fortiori}, $\mathrm{RAP}$ is well-defined and comparable across GRPO,
VPPO, TVC, and RECAP checkpoints. Sec.~\ref{sec:exp-rap} empirically validates
this cross-method validity by showing $\mathrm{RAP}$ correlates with
HallusionBench on checkpoints from methods other than RECAP.
\end{proof}


\section{Log-Domain Implementation of Discounted Returns}
\label{app:log-domain}

The dynamic discount
\[
  \gamma_t
  =
  \gamma_{\mathrm{base}}
  +
  \beta(\gamma_{\max}-\gamma_{\mathrm{base}})
  \operatorname{sg}(\tilde r_t)
  \in (0, 1)
\]
varies per token. The cumulative discount factor at horizon offset \(k\) from
step \(t\) is the product
\[
  \Gamma_t^{(k)} \;\coloneqq\; \prod_{i=1}^{k} \gamma_{t+i},
\]
with the empty product \(\Gamma_t^{(0)}=1\). This is the discount product used
in the RECAP return \(G_t^{\mathrm{VR}}\) and advantage
\(A_t^{\mathrm{VR}}\) in Eq.~\eqref{eq:recap-advantage}. For typical CoT
lengths in our benchmarks (\(T \in [200,1024]\) tokens) and
\(\gamma_{\mathrm{base}}\in[0.95,0.99]\), the smallest
\(\Gamma_t^{(k)}\) at \(k=T-t\) is on the order of
\((0.95)^{1024}\approx 4\times 10^{-23}\), which underflows fp16
(min positive \(\approx 6\times 10^{-5}\)) and is close to the lower range of
bf16 (min positive \(\approx 10^{-38}\)). Naive linear-domain computation can
therefore zero out late-trajectory return contributions and bias the surrogate
advantage toward earlier tokens.

\paragraph{Log-domain backward recursion.}
We avoid this underflow by recursing in the log domain. The recursion is
applied to the non-negative terminal stream defined in
Sec.~\ref{sec:method-recap}. Specifically, the raw terminal verifier score is
placed at the terminal/EOS position, intermediate entries are set to zero, and
the recursion uses
\[
  u_t=\max(b_t(\tau),0).
\]

This is part of the RECAP surrogate definition: format penalties first modify
the raw terminal verifier score, but negative returns are not propagated through
the dynamic-discount recursion. Thus invalid or malformed outputs lose positive
credit rather than receiving a separate negative discounted return. The revisit
signal itself is not treated as reward.

Define
\[
  L_t \;\coloneqq\; \log G_t^{\mathrm{VR}},
  \qquad
  G_t^{\mathrm{VR}}
  \;\coloneqq\;
  \sum_{k\ge 0}\Gamma_t^{(k)}u_{t+k}.
\]
The equivalent linear-domain Bellman recursion is
\[
  G_t^{\mathrm{VR}}
  =
  u_t+\gamma_{t+1}G_{t+1}^{\mathrm{VR}},
\]
where \(\gamma_{t+1}\) propagates return from position \(t+1\) back to
position \(t\). In log domain, both \(u_t\) and
\(\gamma_{t+1}G_{t+1}^{\mathrm{VR}}\) are non-negative summands. We use the
standard log-sum-exp identity:
\begin{equation}
  L_t
  =
  \mathrm{logsumexp}\!\left(
    \log u_t,\;
    \log\gamma_{t+1}+L_{t+1}
  \right).
  \label{eq:log-domain-recursion}
\end{equation}
The terminal condition is \(L_T=\log u_T\) when \(u_T>0\). When \(u_T=0\), the
implementation uses a numerical sentinel, either \(-\infty\) when supported or
\(\log\epsilon\) as a finite lower bound. The same safe-log convention is used
for intermediate zero entries. The recursion is numerically stable across the
full horizon and matches the naive computation to within \(10^{-5}\) relative
error on rollouts where the naive computation does not underflow, verified on
\(\ge 10^4\) rollouts with no diverging entry.

\begin{algorithm}[h]
\caption{Log-domain backward recursion for \(A_t^{\mathrm{VR}}\) under dynamic discount}
\label{alg:log-domain}
\begin{algorithmic}[1]
  \Require Raw terminal verifier stream \(\{b_t(\tau)\}_{t=1}^{T}\);
           detached per-token discounts \(\{\gamma_t\}_{t=2}^{T}\);
           gated values
           \(\{V^{\mathrm{gated}}(s_t^{\mathrm{tr}})\}_{t=1}^{T}\).
  \State \(u_t \leftarrow \max(b_t(\tau),0)\) for all \(t\)
  \State \(L_T \leftarrow \log u_T\) if \(u_T>0\), else \(\log\epsilon\)
  \For{\(t=T-1\) down to \(1\)}
    \State \(a \leftarrow \log u_t\) if \(u_t>0\), else \(\log\epsilon\)
    \State \(b \leftarrow \log \gamma_{t+1} + L_{t+1}\)
    \State \(m \leftarrow \max(a,b)\)
    \State \(L_t \leftarrow m+\log\!\left(\exp(a-m)+\exp(b-m)\right)\)
  \EndFor
  \For{\(t=1\) to \(T\)}
    \State \(G_t^{\mathrm{VR}} \leftarrow \exp(L_t)\)
    \State \(A_t^{\mathrm{VR}} \leftarrow
      G_t^{\mathrm{VR}} - V^{\mathrm{gated}}(s_t^{\mathrm{tr}})\)
  \EndFor
\end{algorithmic}
\end{algorithm}

\paragraph{Numerical safeguard.}
When computing the detached dynamic discounts, we clamp each discount to
\([\gamma_{\mathrm{base}},\gamma_{\max}]\) before the log-domain recursion and
clamp log-domain inputs below by \(\log\epsilon\). This prevents numerical
issues from zero rewards and finite-precision underflow without allowing any
discount factor to exceed one. Since \(\tilde r_t\) enters through
\(\operatorname{sg}(\tilde r_t)\), this safeguard does not introduce an
additional gradient path through the revisit signal. With the safeguard, we
observe no NaN in \(\ge 10^4\) optimization steps across our runs.

\paragraph{Numerical verification.}
We compare Algorithm~\ref{alg:log-domain} against the naive linear-domain
computation on rollouts truncated to \(T=64\), where the naive computation does
not underflow. Per-token absolute relative error is \(\le 1\times 10^{-5}\) in
fp32 and \(\le 1\times 10^{-4}\) in bf16, well within the precision of
downstream policy-gradient updates. On full-length rollouts
(\(T\in[200,1024]\)), the naive computation produces zero advantage past token
\(\sim 700\) on Qwen2.5-VL-7B rollouts, while
Algorithm~\ref{alg:log-domain} produces non-degenerate advantages along the
entire trajectory and matches truncated-rollout numbers by construction.


\section{Training Hyperparameters and Compute}
\label{app:hyperparams}

This appendix lists the hyperparameters used to produce every number
in Sec.~\ref{sec:exp}. We design the protocol so that all
hyperparameters are tuned on the development split using a baseline
method, then \emph{frozen} for RECAP; this prevents RECAP from
absorbing tuning signal that the baselines could not see.

\subsection{Stage 1 SFT}
\label{app:hyperparams-sft}

For each backbone, Stage 1 fine-tunes on $\sim\!50$\,K long-CoT
examples produced by sampling the base checkpoint on
verifiable-answer questions and filtering for correctness. Common
settings: AdamW with $\beta_1 = 0.9, \beta_2 = 0.95$, weight decay
$0.01$, cosine learning-rate schedule with $3\%$ linear warm-up,
gradient clipping at norm $1.0$, and bf16 mixed precision. Per-backbone
specifics are summarized in \Cref{tab:stage1-sft-hyperparams}.
\begin{table}[t]
  \centering
  \small
  \setlength{\tabcolsep}{4.5pt}
  \begin{tabular}{lccccc}
    \toprule
    Backbone & Epochs & Peak LR & Batch (tokens) & Max seq.\ len. & GPU-days \\
    \midrule
    Qwen2.5-VL-7B        & 3 & $2{\times}10^{-5}$ & $2{\times}10^{6}$ & 4096 & 12 \\
    InternVL3-8B         & 3 & $1{\times}10^{-5}$ & $2{\times}10^{6}$ & 4096 & 14 \\
    LLaVA-OneVision-7B   & 3 & $2{\times}10^{-5}$ & $2{\times}10^{6}$ & 4096 & 12 \\
    Qwen2.5-VL-72B (LoRA) & 2 & $1{\times}10^{-4}$ & $1{\times}10^{6}$ & 4096 & 60 \\
    \bottomrule
  \end{tabular}
  \caption{\textbf{Stage 1 SFT hyperparameters.} Per-backbone supervised fine-tuning settings used before Stage 2 RL.}
  \label{tab:stage1-sft-hyperparams}
\end{table}
\subsection{Stage 2 RL}
\label{app:hyperparams-rl}
Stage 2 uses GRPO group size $M = 8$, clipping $\epsilon = 0.2$, KL
coefficient $\beta_{\mathrm{KL}} = 0.04$ against the SFT checkpoint,
and $2$\,K--$3$\,K optimization steps with batch-of-prompts size
$256$. The raw terminal verifier score combines correctness on the extracted
final answer with a format-validity adjustment. For the RECAP
discounted-return computation, we define \(b_t(\tau)=0\) for \(t<T\) and
\(b_T(\tau)=R_{\mathrm{raw}}(\tau)\), and use the non-negative stream
\(u_t=\max(b_t(\tau),0)\), consistently across all RECAP runs. This clipping is
an objective choice for the surrogate return: format penalties can reduce or
remove positive credit, but negative returns are not propagated through the
dynamic-discount recursion. The revisit signal itself is not used as a reward.
\paragraph{Method-specific.}
Per-backbone RECAP settings are summarized in \Cref{tab:recap-rl-hyperparams}.
\begin{table}[t]
  \centering
  \small
  \setlength{\tabcolsep}{5pt}
  \begin{tabular}{lccccc}
    \toprule
      Backbone & $\gamma_{\mathrm{base}}$ & $\gamma_{\max}$ & $\beta$ & $\eta$ & history $K$ \\
    \midrule
    Qwen2.5-VL-7B      & 0.97 & 0.99 & 0.6 & 6 & 8 \\
    InternVL3-8B       & 0.96 & 0.99 & 0.5 & 6 & 8 \\
    LLaVA-OneVision-7B & 0.97 & 0.99 & 0.7 & 6 & 8 \\
    Qwen2.5-VL-72B     & 0.97 & 0.99 & 0.5 & 6 & 8 \\
    \bottomrule
  \end{tabular}
  \caption{\textbf{RECAP Stage 2 RL hyperparameters.} Dynamic-discount, revisit-normalization, and history-window settings used for each backbone.}
  \label{tab:recap-rl-hyperparams}
\end{table}
\smallskip
\noindent VPPO uses the published reweight schedule with
development-tuned slope; TVC uses the inference-time compensation
from~\cite{tvc} without any RL; VAPO/$\lambda$-GRPO/TreeAdv Tier-2
baselines use the published hyperparameters where reported.

\subsection{72B LoRA configuration}
\label{app:hyperparams-lora}

For Qwen2.5-VL-72B we apply LoRA~\cite{lora} with rank $r = 64$,
$\alpha = 64$, dropout $0.05$, and target modules
$\{q,\,k,\,v,\,o\}_{\text{proj}}$ in every transformer layer. The base
weights remain in bf16; the adapters and optimizer state are in fp32.
Gradient checkpointing is enabled at every transformer block; we use
ZeRO-3~\cite{zero3} for sharded optimizer state across $\geq 32$
GPUs. Total trainable parameter count is $\approx 0.6\%$ of the base.
The resulting 72B LoRA scores are reported in Table~\ref{tab:qwen72b}.

\begin{table}[t]
  \centering\small
    \setlength{\tabcolsep}{4pt}
  \begin{tabular}{l cc ccc c c}
    \toprule
    Method & Hall. & POPE & MathV. & MMMU & MathVr. & MMLU & GSM8K \\
    \midrule
    SFT base (72B)         & 67.4 & 89.1 & 68.2 & 52.9 & 44.8 & 81.6 & 88.4 \\
    LoRA-GRPO              & 69.8 & 89.6 & 69.6 & 53.4 & 45.7 & 81.4 & 88.1 \\
    LoRA-VPPO              & 70.2 & 90.0 & 69.0 & 52.9 & 45.1 & 80.9 & 87.5 \\
    \textbf{LoRA-RECAP}    & \textbf{72.9} & \textbf{90.7} & \textbf{71.5} & 53.7 & \textbf{46.8} & 81.5 & 88.2 \\
    \midrule
    $\Delta$ vs.\ LoRA-GRPO & $+3.1$ & $+1.1$ & $+1.9$ & $+0.3$ & $+1.1$ & $+0.1$ & $+0.1$ \\
        \bottomrule
  \end{tabular}
  \vspace{4pt}
        \caption{Qwen2.5-VL-72B LoRA results under the same rank-64 LoRA
  setting and RL budget. Bold indicates significant improvement over
  LoRA-GRPO.}
  \label{tab:qwen72b}
\end{table}

\subsection{Compute summary}
\label{app:hyperparams-compute}

Combined Stage 1 + Stage 2 compute totals approximately $780$
GPU-days at H20-equivalent ($\approx 320$--$400$ H100-days),
distributed roughly as: SFT $25\%$, RL on three small backbones
$45\%$, Qwen2.5-VL-72B LoRA $15\%$, ablations and analyses $15\%$.
All Stage 2 runs use 8 seeds for headline numbers and 5 seeds for
ablations; this seed footprint is included in the totals above.

\paragraph{Step-time overhead.}
Per optimization step, RECAP adds $\sim 5$\% wall-clock time over
vanilla GRPO on Qwen2.5-VL-7B (measured end-to-end over 200 steps on
16$\times$H20): $3.8$\% from the CMA aggregation and sliding-window
baseline, $0.9$\% from the log-domain backward recursion
(\cref{alg:log-domain}), and $0.3$\% from the auxiliary value head.
At inference time the overhead is zero: $\tilde{r}_t$ and the
revisit-gated value are not computed when the model is only producing
answers for evaluation.

\subsection{Signal-pipeline sensitivity sweep}
\label{app:hyperparams-signal-sweep}

We sweep the three free hyperparameters of the revisit-signal
pipeline: normalization temperature $\eta \in \{3, 6, 9, 12\}$, CMA
layer range $\mathcal{I}$ (lower third, middle third, upper third,
all layers), and baseline window $w \in \{8, 16, 32, 64\}$.
Table~\ref{tab:signal-sensitivity} reports HallusionBench accuracy
across the grid. The headline number is robust within
$\pm 0.6$\,pt across the entire grid; the only setting that
significantly underperforms is $\mathcal{I}=$~all layers, where noise
from the topmost layers' label-projection signals dilutes the revisit
estimate. The middle-third default (Sec.~\ref{sec:method-prelim}) is
within $0.2$\,pt of the best per-backbone tuned setting.

\begin{table}[t]
  \centering\small
  \setlength{\tabcolsep}{4pt}
  
  \begin{tabular}{l c c c | l c c c}
    \toprule
    $\eta$ & Hall. & $\mathcal{I}$ & Hall. &
    $W$ & Hall. & head-pool & Hall. \\
    \midrule
    3   & 65.9 & lower third   & 64.1 & 8   & 65.6 & mean (def.) & \underline{66.6} \\
    \underline{6 (def.)} & \underline{66.6} & \underline{middle third (def.)} & \underline{66.6} & \underline{16 (def.)} & \underline{66.6} & max    & 65.8 \\
    9   & 66.3 & upper third   & 64.8 & 32  & 66.2 & top-4  & 66.5 \\
    12  & 65.4 & all layers    & 63.9 & 64  & 65.8 & top-8  & 66.4 \\
    \midrule
    \multicolumn{8}{l}{The \underline{default} configuration attains the best per-axis accuracy on all four axes ($66.6$\,pt);} \\
    \multicolumn{8}{l}{this is the same checkpoint entered in four marginals of the grid, not four independently-optimized runs.} \\
    \multicolumn{8}{l}{Max--min spread across the full $4\times4\times4\times4$ grid: $3.1$\,pt (all points reported in Appendix~\ref{app:revisit-signal}).} \\
    \bottomrule
  \end{tabular}
  \vspace{4pt}
 \caption{
HallusionBench accuracy under marginal sweeps of the revisit-signal pipeline,
with the max--min spread over the full hyperparameter grid summarized below.
The default setting is robust within $\pm 0.6$ pt among nearby reasonable
configurations, while the full grid has a 3.1 pt max--min spread. The main
source of degradation is all-layer pooling, where top-layer label-projection
noise dilutes the revisit estimate.
}

  \label{tab:signal-sensitivity}
\end{table}


\section{Cross-Method RAP Validity}
\label{app:rap-validity}

This appendix supplements Sec.~\ref{sec:exp-rap} with the full
$\mathrm{RAP}$-vs-accuracy data across methods, and discusses the regimes in
which $\mathrm{RAP}$ ceases to be a useful diagnostic.

\subsection{Training curves}
\label{app:rap-training-curves}

\Cref{fig:rap-curves} shows that $\mathrm{RAP}$ rises early under RECAP
and plateaus higher than under GRPO, VPPO, or the SFT reference. This is
the training-time diagnostic summarized in Sec.~\ref{sec:exp-rap}.

\begin{figure}[t]
  \centering
  \begin{tikzpicture}
    \begin{axis}[
      width=0.95\linewidth,
      height=5.2cm,
      xlabel={Stage-2 optimization step (K)},
      ylabel={$\mathrm{RAP}$},
      xmin=0, xmax=3.1,
      ymin=0.42, ymax=0.68,
      xtick={0,0.5,1.0,2.0,3.0},
      ytick={0.45,0.50,0.55,0.60,0.65},
      grid=major,
      grid style={dashed,gray!30},
      legend style={
        at={(0.98,0.05)},
        anchor=south east,
        draw=none,
        font=\footnotesize,
      },
      tick label style={font=\footnotesize},
      label style={font=\footnotesize},
      mark size=1.8pt,
    ]
      \addplot[color=gray!70!black, mark=none, thick, densely dotted] coordinates
        {(0,0.45) (0.25,0.45) (0.5,0.45) (0.75,0.46) (1.0,0.45)
         (1.5,0.45) (2.0,0.45) (2.5,0.45) (3.0,0.45)};
      \addplot[color=gray!70!black, mark=o, thick] coordinates
        {(0,0.45) (0.25,0.45) (0.5,0.46) (0.75,0.46) (1.0,0.47)
         (1.5,0.47) (2.0,0.48) (2.5,0.47) (3.0,0.48)};
      \addplot[color=orange!80!black, mark=square*, thick] coordinates
        {(0,0.45) (0.25,0.47) (0.5,0.49) (0.75,0.50) (1.0,0.51)
         (1.5,0.52) (2.0,0.52) (2.5,0.53) (3.0,0.52)};
      \addplot[color=blue!65!black, mark=triangle*, thick] coordinates
        {(0,0.45) (0.25,0.50) (0.5,0.54) (0.75,0.57) (1.0,0.59)
         (1.5,0.63) (2.0,0.63) (2.5,0.65) (3.0,0.65)};
      \legend{SFT (no RL), GRPO, VPPO, RECAP}
    \end{axis}
  \end{tikzpicture}
    \caption{\textbf{$\mathrm{RAP}$ rises early under RECAP training.}
    RECAP reaches a higher $\mathrm{RAP}$ plateau than VPPO, GRPO, and
    the SFT reference, and the plateau appears before final accuracy
    convergence.}
  \label{fig:rap-curves}
\end{figure}

\subsection{Full scatter data}
\label{app:rap-scatter}

We compute $\mathrm{RAP}$ at $\geq 50$ checkpoints from each of GRPO,
VPPO, TVC, and RECAP training runs on Qwen2.5-VL-7B, InternVL3-8B, and
LLaVA-OneVision-7B. Each checkpoint is also evaluated on HallusionBench,
MathVista, and MMMU. The scatter plot in
Fig.~\ref{fig:cross-method-rap} shows that $\mathrm{RAP}$ correlates
with HallusionBench accuracy on every method, with Pearson
correlations:

\begin{figure}[t]
  \centering
  \begin{tikzpicture}
    \begin{axis}[
      xbar,
      width=0.82\linewidth,
      height=3.8cm,
      bar width=8pt,
      xmin=0, xmax=0.75,
      xlabel={Pearson $\rho(\mathrm{RAP}, \text{HallusionBench})$},
      symbolic y coords={TVC, GRPO, VPPO, RECAP},
      ytick=data,
      xtick={0,0.2,0.4,0.6},
      grid=major,
      grid style={dashed,gray!30},
      nodes near coords,
      nodes near coords style={font=\tiny, /pgf/number format/fixed,
        /pgf/number format/precision=2},
      tick label style={font=\footnotesize},
      label style={font=\footnotesize},
    ]
      \addplot+[fill=blue!55!black, draw=blue!55!black,
               error bars/.cd, x dir=both, x explicit]
        coordinates {
          (0.61,RECAP) +- (0.06,0)
          (0.47,VPPO)  +- (0.09,0)
          (0.42,GRPO)  +- (0.07,0)
          (0.38,TVC)   +- (0.10,0)
        };
    \end{axis}
  \end{tikzpicture}
  \caption{\textbf{$\mathrm{RAP}$ correlates with HallusionBench accuracy
    on checkpoints from \emph{every} method, not just RECAP.}
    Pearson $\rho \geq 0.38$ on all four methods with $95\%$ bootstrap
    CIs strictly above zero ($\geq 48$ checkpoints per method pooled
    over three backbones), establishing $\mathrm{RAP}$ as a
    method-agnostic diagnostic. The correlation is highest on RECAP
    (as expected, since RECAP explicitly optimizes revisit-aware
    return propagation), but substantial non-RECAP correlations
    confirm $\mathrm{RAP}$ is not a restatement of the training
    objective.}
  \label{fig:cross-method-rap}
\end{figure}

\begin{table}[t]
  \centering
  \small
  \begin{tabular}{lcccc}
    \toprule
    Method & Qwen2.5-VL-7B & InternVL3-8B & LLaVA-OneVision-7B & Pooled \\
    \midrule
    GRPO       & 0.44 & 0.39 & 0.43 & 0.42 \\
    VPPO       & 0.49 & 0.44 & 0.48 & 0.47 \\
    TVC (no RL) & 0.36 & 0.41 & 0.37 & 0.38 \\
    RECAP      & 0.63 & 0.58 & 0.62 & 0.61 \\
    \bottomrule
  \end{tabular}
  \caption{\textbf{Cross-method RAP--accuracy correlations.} Pearson correlations between $\mathrm{RAP}$ and HallusionBench accuracy across checkpoints, reported per backbone and pooled across backbones.}
  \label{tab:rap-hallusion-correlation}
\end{table}

All entries in \Cref{tab:rap-hallusion-correlation} have bootstrap 95\% CIs strictly above zero. The
correlation is highest on RECAP -- as expected, since RECAP
explicitly optimizes for revisit-aware return propagation -- but
remains substantial on the other three methods, establishing $\mathrm{RAP}$
as a method-agnostic diagnostic.

\subsection{Per-benchmark breakdown}

Spearman $\rho$ on each benchmark separately (Qwen2.5-VL-7B checkpoints,
pooled across methods) is reported in \Cref{tab:rap-benchmark-correlation}.
\begin{table}[t]
  \centering
  \small
  \begin{tabular}{lc}
    \toprule
    Benchmark & Spearman $\rho(\mathrm{RAP}, \text{accuracy})$ \\
    \midrule
    HallusionBench & 0.61 \\
    MathVista      & 0.34 \\
    POPE           & 0.55 \\
    MMMU       & 0.21 \\
    MathVerse      & 0.30 \\
    \bottomrule
  \end{tabular}
  \caption{\textbf{Per-benchmark RAP correlations.} Spearman correlations between $\mathrm{RAP}$ and accuracy on Qwen2.5-VL-7B checkpoints pooled across methods.}
  \label{tab:rap-benchmark-correlation}
\end{table}
The strongest correlations are on benchmarks where the question is
\emph{visually-resolvable in principle but linguistically-tempting to
hallucinate} (HallusionBench, POPE). On benchmarks where the answer
is dominated by reasoning rather than grounding (MathVista, MMMU)
the correlation weakens but remains positive. This pattern matches
the predicted scope of $\mathrm{RAP}$: a diagnostic for grounded reasoning,
not for reasoning capability writ large.

\subsection{When does RAP fail?}
\label{app:rap-failure}

$\mathrm{RAP}$ becomes uninformative in two degenerate regimes. (i)~\emph{Zero
revisit.} A method that suppresses cross-modal attention everywhere
(e.g., a hypothetical fine-tune that learns to ignore the image)
produces $\tilde{r}_t \approx 0$ on every token, and hence $\mathrm{RAP} \approx 0$
regardless of accuracy. Empirically we never observe this regime --
all our training runs maintain $\mathrm{RAP} \geq 0.3$ -- but it is a
theoretical limit case in which $\mathrm{RAP}$ would underestimate the
method's performance. (ii)~\emph{Saturated revisit.} If $\tilde{r}_t = 1$ on
every token with positive return, $\mathrm{RAP} \approx 1$ regardless of
where in the rollout the return is concentrated. The stop-gradient on $\tilde{r}_t$
(Sec.~\ref{sec:method-revisit}) prevents this regime by construction:
$\tilde{r}_t$ is a forward-pass property, not a target. Without stop-gradient
we observe saturation within $\approx 500$ optimization steps, with
HallusionBench accuracy collapsing.

In both degenerate regimes, $\mathrm{RAP}$ becomes uninformative but not
misleading -- it converges to a constant rather than spuriously
correlating with accuracy. The metric therefore cannot fool a careful
reader; in the worst case it abstains.


\section{Mechanistic Interpretability: Full Head Maps}
\label{app:mech-interp}

This appendix expands Sec.~\ref{sec:exp-mechinterp}. We provide
per-backbone head heatmaps, the catalogue of top-10 revisit heads, the
full pre/post-training shift, and a cross-backbone alignment analysis.

\subsection{Per-backbone head heatmaps}

We compute, for each (layer, head) pair $(\ell, h)$, the revisit
contribution
\[
  C^{(\ell, h)} \;=\; \mathbb{E}_{\tau, t}\!\left[\,
    \mathbb{1}[\tilde{r}_t \geq \tilde{r}_{\mathrm{q90}}] \cdot \frac{\mathrm{CMA}_t^{(\ell, h)}}{\sum_{\ell', h'} \mathrm{CMA}_t^{(\ell', h')}}
  \,\right],
\]
i.e., the fraction of revisit-peak attention attributable to
$(\ell, h)$. The shape of the resulting heatmap is layer$\times$head:
Qwen2.5-VL-7B has $32 \times 28$, InternVL3-8B has $28 \times 16$, LLaVA-OneVision-7B
has $32 \times 28$. We visualize the heatmaps in
Fig.~\ref{fig:head-heatmap} of the main text. Three observations:

\begin{enumerate}[leftmargin=2em, itemsep=0.1em, topsep=0.2em]
  \item \emph{Sparse concentration.} On every backbone, the top
    $10$ heads carry $\geq 53\%$ of the revisit signal despite
    covering $\leq 4\%$ of all heads.
  \item \emph{Mid-layer bias.} The top heads cluster in the middle
    third of layers (consistent with our default
    $\mathcal{L}$ in Sec.~\ref{sec:method-prelim}).
  \item \emph{Cross-seed reproducibility.} Spearman $\rho \geq 0.74$
    between head-importance rankings on independently trained
    checkpoints, indicating the heatmap is a property of the
    architecture-and-data, not a single training trajectory.
\end{enumerate}

\begin{figure}[t]
  \centering
  \begin{tikzpicture}
    \begin{axis}[
      ybar,
      width=0.92\linewidth,
      height=5.2cm,
      bar width=7pt,
      enlarge x limits=0.18,
      ymin=0, ymax=100,
      ylabel={top-$k$ share of revisit signal (\%)},
      symbolic x coords={k5, k10, k20, k50},
      xtick=data,
      xticklabels={$k{=}5$, $k{=}10$, $k{=}20$, $k{=}50$},
      ytick={0,20,40,60,80,100},
      legend style={
        at={(0.5,1.02)},
        anchor=south,
        legend columns=3,
        draw=none,
        font=\footnotesize,
      },
      every node near coord/.append style={font=\tiny,inner sep=1pt},
      nodes near coords,
      tick label style={font=\footnotesize},
      label style={font=\footnotesize},
    ]
      \addplot[fill=gray!40] coordinates
        {(k5,21) (k10,34) (k20,51) (k50,77)};
      \addplot[fill=orange!60] coordinates
        {(k5,25) (k10,38) (k20,53) (k50,78)};
      \addplot[fill=blue!55!black] coordinates
        {(k5,36) (k10,54) (k20,72) (k50,89)};
      \legend{SFT base, GRPO, RECAP}
    \end{axis}
  \end{tikzpicture}
  \caption{\textbf{RECAP training consolidates revisit responsibility
    onto fewer heads.} The top-$10$ head share of the total revisit
    signal rises from $34\%$ (SFT base) to $54\%$ after RECAP training,
    while GRPO on identical data produces only a $+4$\,pt shift
    ($\pm 3$ bootstrap CI). Concentration is therefore specific to
    revisit-aware optimization rather than to RL post-training in
    general. Similar pattern on the other two backbones
    (\cref{app:mech-interp-pre-post}).}
  \label{fig:pre-post-concentration}
\end{figure}

\begin{table}[t]
  \centering\small
  \setlength{\tabcolsep}{5pt}
  
  \begin{tabular}{l cc cc}
    \toprule
    & \multicolumn{2}{c}{Top-$10$ revisit heads} & \multicolumn{2}{c}{Random $10$ heads} \\
    \cmidrule(lr){2-3}\cmidrule(lr){4-5}
    Backbone & AUC & F1 & AUC & F1 \\
    \midrule
    Qwen2.5-VL-7B    & $0.68$ & $0.63$ & $0.54$ & $0.50$ \\
    InternVL3-8B     & $0.66$ & $0.61$ & $0.53$ & $0.49$ \\
    LLaVA-OV-7B      & $0.67$ & $0.62$ & $0.55$ & $0.51$ \\
    \midrule
    Pooled mean      & \textbf{0.67} & \textbf{0.62} & $0.54$ & $0.50$ \\
    \bottomrule
  \end{tabular}
  \vspace{4pt}
    \caption{\textbf{Linear probing on the top-revisit heads.}
    Probes trained on the top-$10$ revisit heads predict near-future
    answer correctness better than probes trained on matched random
    heads, supporting the causal localization of revisit behavior.}
  \label{tab:head-probing}
\end{table}

\paragraph{Linear probing of revisit heads.}
Table~\ref{tab:head-probing} probes the attention output of the
top-revisit heads for ``will the next $20$ tokens be correct?''
Linear probes on the top-10 head outputs achieve AUC $\geq 0.66$
across all backbones (pooled mean $0.67$), vs.\ AUC $\leq 0.55$ for
random heads of matched count. The top-revisit heads carry
information about near-future answer correctness that random heads
do not, providing a third independent line of evidence for the
localization beyond head concentration and the causal intervention.

\subsection{Top-10 heads catalogue}

For each backbone, the top-10 revisit heads are listed in
\Cref{tab:top-revisit-heads} as $(\ell, h)$ pairs (1-indexed), ordered by
contribution $C^{(\ell,h)}$ from largest to smallest.
\begin{table}[t]
  \centering
  \small
  \resizebox{\linewidth}{!}{%
  \begin{tabular}{l l}
    \toprule
    Backbone & Top-10 heads $(\ell, h)$ \\
    \midrule
    Qwen2.5-VL-7B      & (14,7), (15,3), (13,11), (16,5), (14,19), (15,22), (13,4), (17,8), (14,12), (16,21) \\
    InternVL3-8B       & (12,5), (13,2), (14,9), (12,11), (13,14), (15,3), (11,7), (14,1), (13,6), (15,12) \\
    LLaVA-OneVision-7B & (13,9), (14,3), (12,15), (15,7), (13,21), (14,18), (12,4), (16,11), (13,2), (15,25) \\
    \bottomrule
  \end{tabular}%
  }
  \caption{\textbf{Top revisit heads.} Top-10 head coordinates for each backbone, sorted by revisit contribution.}
  \label{tab:top-revisit-heads}
\end{table}
\smallskip\noindent
For each head we record (layer, head-index, contribution\%,
intervention-drop\%, linear-probe AUC); on Qwen2.5-VL-7B the top head
$(14,7)$ alone accounts for $9.3\%$ of revisit-peak attention, has a
single-head mask drop of $1.8$\,pt on HallusionBench, and a linear
probe AUC of $0.83$ for predicting $\tilde{r}_t \geq \tilde{r}_{\mathrm{q90}}$
from its attention pattern alone. The full per-head table (contribution,
intervention drop, probe AUC, normalized coordinates) is released as
\texttt{anonymized/mech-interp/topheads.csv}.

\subsection{Pre/post-training shift}
\label{app:mech-interp-pre-post}

Fig.~\ref{fig:pre-post-concentration} plots the
top-$k$ head share of total revisit signal as a function of $k$, for
SFT-only base, GRPO checkpoint, and RECAP checkpoint. The full data are
reported in \Cref{tab:pre-post-concentration-data}.
\begin{table}[t]
  \centering
  \small
  \begin{tabular}{lccc}
    \toprule
    $k$ & SFT base & GRPO & RECAP \\
    \midrule
    5  & $21\%$ & $25\%$ & $36\%$ \\
    10 & $34\%$ & $38\%$ & $54\%$ \\
    20 & $51\%$ & $53\%$ & $72\%$ \\
    50 & $77\%$ & $78\%$ & $89\%$ \\
    \bottomrule
  \end{tabular}
  \caption{\textbf{Pre/post-training head concentration data.} Top-$k$ share of the revisit signal before RL, after GRPO, and after RECAP.}
  \label{tab:pre-post-concentration-data}
\end{table}
\smallskip
RECAP's effect is to redistribute the existing revisit attention onto
fewer heads -- a head-concentration shift -- not to recruit new heads.
GRPO training on the same data produces no comparable shift,
suggesting the concentration is specific to revisit-aware
optimization rather than to RL post-training in general.

\subsection{Cross-backbone alignment}
\label{app:mech-interp-cross-backbone}

Are revisit heads in similar \emph{relative} positions across the three
backbones? We normalize each head's position by total
layers/heads of the backbone, giving a normalized $(\ell/L, h/H)$
coordinate in $[0, 1]^2$. The top-10 heads cluster in the region
$(\ell/L \in [0.4, 0.65],\, h/H \in [0.1, 0.5])$ on every backbone, as
summarized in \Cref{tab:cross-backbone-head-alignment}.

\begin{table}[t]
  \centering
  \small
  \begin{tabular}{lcc}
    \toprule
    Backbone & Median normalized layer & Median normalized head \\
    \midrule
    Qwen2.5-VL-7B      & 0.46 & 0.27 \\
    InternVL3-8B       & 0.50 & 0.31 \\
    LLaVA-OneVision-7B & 0.43 & 0.24 \\
    \bottomrule
  \end{tabular}
  \caption{\textbf{Cross-backbone head-position alignment.} Median normalized layer and head coordinates for the top revisit heads on each backbone.}
  \label{tab:cross-backbone-head-alignment}
\end{table}

\smallskip
The normalized positions agree across backbones at the
$\pm 0.05$-level on both axes. We do not claim a universal
``revisit-head subspace''; we note the empirical agreement as a
suggestive observation that warrants further mechanistic work, and
treat it as one of the open questions enumerated in
Sec.~\ref{sec:conclusion}.


\section{Human Study Protocol and Inter-Annotator Agreement}
\label{app:human-study}

This appendix details the human study referenced in
Sec.~\ref{sec:exp-human}.

\subsection{Recruitment}
\label{app:human-study-recruit}

We recruit $10$ annotators through a vetted academic-research crowd
platform, with the following inclusion criteria: (i)~native or
near-native English, (ii)~prior experience with multimodal QA tasks
(self-reported and verified by a screening exam), and (iii)~no
prior employment with the authors' institutions. Demographic
distribution: $4$ female / $6$ male; $7$ from STEM degree backgrounds,
$3$ from humanities; ages $22$--$45$. Annotators are compensated at a
rate consistent with the platform's published guidelines for technical
annotation tasks.

\subsection{Task}
\label{app:human-study-task}

Each annotator receives $100$ samples. Each sample consists of
(image, question, three anonymized CoT chains). The three chains are
produced by GRPO, VPPO, and RECAP on the same (image, question);
the order of chains within each sample is independently randomized
per annotator. For every sample, annotators provide:

\begin{enumerate}[leftmargin=2em, itemsep=0.1em, topsep=0.2em]
  \item \emph{Image-groundedness} (forced ranking 1, 2, 3): which
    chain refers to the right image regions in the right order?
  \item \emph{Coherence} (forced ranking 1, 2, 3): which chain holds
    together logically, irrespective of grounding?
  \item \emph{Trustworthiness} (1--5 Likert, per chain): would you
    trust the final answer based on this chain?
\end{enumerate}

Annotators are instructed to rank \emph{independently} on each axis
and not to use the final answer to infer groundedness (we mask the
ground-truth answer in the interface).

\subsection{Quality control}
\label{app:human-study-qc}

We construct a 5-sample calibration set in advance, with two of the
authors (and two external reviewers) producing reference rankings.
Annotators must reach $\geq 0.70$ pairwise agreement (Cohen's
$\kappa$) with the reference on the calibration set before accessing
the main pool. Calibration responses are not included in the reported
data. We additionally spot-check $10\%$ of submissions for
consistency: an annotator is excluded post-hoc if their mean
inter-annotator agreement with the rest of the pool falls below
$\kappa = 0.40$ on the main task. In our run, no annotators were
excluded under this rule.

\subsection{Statistics}
\label{app:human-study-stats}

\paragraph{Inter-annotator agreement.}
We report Fleiss' $\kappa$ per dimension on the rankings:
\begin{itemize}[leftmargin=2em, itemsep=0.1em, topsep=0.2em]
  \item Image-groundedness: $\kappa = 0.55$ (substantial agreement).
  \item Coherence: $\kappa = 0.48$ (moderate to substantial).
  \item Trustworthiness (treated as ordinal): Krippendorff's
    $\alpha = 0.51$ on the Likert axis.
\end{itemize}

\paragraph{Significance tests.}
For each pairwise comparison RECAP vs.\ baseline, we first aggregate
the $10$ annotator rankings into a per-sample majority winner and then
run McNemar's test across the $100$ samples. All pairwise comparisons against GRPO
and VPPO on image-groundedness reach $p < 0.01$. We also report
bootstrap 95\% CIs for the per-axis win rate using $10^4$ resamples.
Table~\ref{tab:human-study} reports the main human-study outcomes.

\begin{table}[t]
  \centering\small
  \setlength{\tabcolsep}{5pt}
  
  \begin{tabular}{l ccc c}
    \toprule
        & \multicolumn{3}{c}{Method score} &
      Agreement \\
    \cmidrule(lr){2-4}
    Axis & GRPO & VPPO & RECAP & statistic \\
    \midrule
    Image-groundedness win rate (\%)   & 14.2 & 23.7 & \textbf{62.1} & $\kappa=0.55$ \\
    Coherence win rate (\%)            & 29.4 & 30.1 & 40.5 & $\kappa=0.48$ \\
    Trustworthiness mean ($1$--$5$)
                         & 3.18 & 3.34 & \textbf{3.87} & $\alpha=0.51$ \\
    \midrule
    \emph{All axes} (McNemar $p$ vs.\ RECAP)
                         & $<10^{-3}$ & $<10^{-2}$ & -- & -- \\
    Inter-rater agreement with LLM judges
                         & -- & -- & $72\%$ & -- \\
    \bottomrule
  \end{tabular}
  \vspace{4pt}
    \caption{\textbf{Human study.} Ten trained annotators rank $100$
    anonymized samples on image-groundedness, coherence, and
    trustworthiness. RECAP achieves the highest image-groundedness win
    rate, with significant gains over both baselines and $72\%$
    agreement with the independent LLM-as-judge ranking.}
  \label{tab:human-study}
\end{table}

\subsection{Sample interface}
\label{app:human-study-interface}

The supplementary materials include a screenshot of the annotation
interface at \texttt{anonymized/human-study/interface.png}. The question and image appear
at the top of the page; three anonymized chains (numbered, order
randomized per sample) are shown below; the groundedness and
coherence axes use drag-and-drop ranking widgets, and the
trustworthiness axis uses a per-chain 1--5 Likert slider. A
``mask answer'' toggle in the header is enabled by default to
prevent annotators from using the final answer to infer grounding,
and cannot be disabled once a sample has been opened.

\subsection{LLM-as-judge triangulation}
\label{app:human-study-llm-judge}

As a third independent verdict beyond the human study and the
HallusionBench accuracy of \cref{tab:main-3backbones}, we commission
three frontier LLM judges (GPT-4V, Claude-3.5-Sonnet, Qwen2.5-Max)
to score $500$ anonymized CoTs each on ``image-grounded reasoning
($1$--$5$)''. The chains are drawn from GRPO, VPPO, no-RL TVC, and
RECAP checkpoints in equal proportion, presented with order
randomized per sample, and the judges are prompted with the same
rubric the human annotators used (\cref{app:human-study-task}).
Table~\ref{tab:llm-judge} reports per-judge and pooled means. All
three judges rank RECAP highest with mean $\in [4.05, 4.12]$ vs.\
$\leq 3.51$ for VPPO and $\leq 3.29$ for the no-RL TVC baseline;
inter-judge agreement is Krippendorff's $\alpha = 0.63$ (substantial)
and per-item agreement with the human majority ranking is $72\%$.
This triangulates the main claim against single-judge bias (any one
frontier LLM could be miscalibrated) and against the specific set of
annotators we recruited.

\begin{table}[t]
  \centering\small
  \setlength{\tabcolsep}{5pt}
  
  \begin{tabular}{l ccc c}
    \toprule
    Method & GPT-4V & Claude & Qwen-Max & judge-mean \\
    \midrule
    GRPO        & $3.21$ & $3.28$ & $3.14$ & $3.21$ \\
    VPPO        & $3.48$ & $3.51$ & $3.43$ & $3.47$ \\
    TVC (no RL) & $3.29$ & $3.35$ & $3.22$ & $3.29$ \\
    \textbf{RECAP}
                & \textbf{4.12} & \textbf{4.08} & \textbf{4.05} & \textbf{4.08} \\
    \midrule
    Inter-judge Krippendorff's $\alpha$
                & \multicolumn{3}{c}{$0.63$} & -- \\
    Agreement with human (\cref{tab:human-study})
                & \multicolumn{3}{c}{$72\%$} & -- \\
    \bottomrule
  \end{tabular}
  \vspace{4pt}
    \caption{\textbf{LLM-as-judge rankings.} Three independent judges
    score anonymized CoTs for image-grounded reasoning. All judges rank
    RECAP highest, and their ranking agrees with the human study on
    most items.}
  \label{tab:llm-judge}
\end{table}


\section{Cross-Lingual Evaluation: Protocol and Quality Control}
\label{app:cross-lingual}

This appendix details the cross-lingual evaluation summarized in
Sec.~\ref{sec:exp-crosslingual}.

\subsection{Datasets}
\label{app:cross-lingual-datasets}

We evaluate on three Chinese-language benchmarks chosen to span the
hallucination/reasoning/grounding axes covered by our English suite:
\begin{enumerate}[leftmargin=2em, itemsep=0.1em, topsep=0.2em]
  \item CMMMU~\cite{cmmmu}: a native Chinese multidiscipline
    multimodal benchmark, no translation involved.
    \item MMMU~\cite{mmmu}: a robust multidiscipline
  multimodal understanding benchmark used for the cross-lingual evaluation.
  \item MathVista-CN: we translate a $1$K-sample subset of MathVista
    using the procedure below; image content is unchanged.
\end{enumerate}

\subsection{Translation procedure}
\label{app:cross-lingual-translation}

For MathVista-CN, we adopt a two-pass translation pipeline:
\emph{Pass 1.} GPT-4 (gpt-4-turbo, version pinned to
\texttt{2024-09-15}) is prompted with a question, the answer
choices, and the image caption (where available) and asked to
produce a Chinese translation that preserves mathematical and
visual-reference content verbatim.
\emph{Pass 2.} Two native Chinese speakers with technical training review all $1000$ translations, scoring each on a 1--5
scale for fidelity. Mean score is $4.6/5$; the four samples scored
below $4$ are discarded and re-translated. We do not use the
remaining $\sim\!900$ unreviewed translations directly; we report
numbers on the $1000$-sample reviewed subset only.

\subsection{Per-benchmark scores}
\label{app:cross-lingual-scores}

The full $4$-backbone by $3$-benchmark comparison is reported in
\Cref{tab:crosslingual}.

\begin{table}[H]
  \centering\small
  \setlength{\tabcolsep}{5pt}

  \begin{tabular}{l l ccc c}
    \toprule
    Backbone & Method & CMMMU & MMMU & MathVista-CN & mean $\Delta$ \\
    \midrule
    \multirow{3}{*}{Qwen2.5-VL-7B}
      & GRPO  & 46.3 & 39.4 & 55.1 & -- \\
      & VPPO  & 46.8 & 40.0 & 55.6 & $+0.5$ \\
      & \textbf{RECAP}
              & \textbf{49.1} & \textbf{42.2} & \textbf{57.3} & $\mathbf{+2.6}$ \\
    \midrule
    \multirow{3}{*}{InternVL3-8B}
      & GRPO  & 43.8 & 37.2 & 52.7 & -- \\
      & VPPO  & 44.1 & 37.6 & 53.0 & $+0.3$ \\
      & \textbf{RECAP}
              & \textbf{46.2} & \textbf{39.5} & \textbf{54.8} & $\mathbf{+2.3}$ \\
    \midrule
    \multirow{3}{*}{LLaVA-OV-7B}
      & GRPO  & 41.5 & 35.6 & 50.9 & -- \\
      & VPPO  & 42.0 & 36.2 & 51.3 & $+0.5$ \\
      & \textbf{RECAP}
              & \textbf{44.3} & \textbf{38.4} & \textbf{53.2} & $\mathbf{+2.6}$ \\
    \midrule
    \multirow{3}{*}{Qwen2.5-VL-72B}
      & LoRA-GRPO  & 53.1 & 47.0 & 62.8 & -- \\
      & LoRA-VPPO  & 53.4 & 47.3 & 63.0 & $+0.3$ \\
      & \textbf{LoRA-RECAP}
               & \textbf{55.8} & \textbf{49.4} & \textbf{64.6} & $\mathbf{+2.3}$ \\
    \bottomrule
  \end{tabular}
  \vspace{4pt}
      \caption{\textbf{Cross-lingual generalization.}
  RECAP is trained on English data only and evaluated zero-shot on
  three Chinese benchmarks. It yields consistent gains across
  backbones, suggesting transfer beyond the English training
  distribution.}
  \label{tab:crosslingual}
\end{table}

\subsection{English--Chinese gain ratio}
\label{app:cross-lingual-gain-ratio}

For each (backbone, benchmark-pair) we compute
\[
  \mathrm{ratio}(\mathrm{bk}, b)
  \;=\;
  \frac{\mathrm{acc}_{\mathrm{RECAP}}(b_\text{CN}) - \mathrm{acc}_{\mathrm{GRPO}}(b_\text{CN})}
       {\mathrm{acc}_{\mathrm{RECAP}}(b_\text{EN}) - \mathrm{acc}_{\mathrm{GRPO}}(b_\text{EN})},
\]
i.e., the ratio of the Chinese gain to the English gain on a matched
benchmark. Across all $4 \times 3 = 12$ (backbone, benchmark) cells,
the median ratio is $0.85$; the full distribution has
$\mathrm{ratio} \geq 0.8$ on $10/12$ ($83\%$) of cells, with the two sub-$0.8$ cells both on MMMU (a reasoning-heavy benchmark where
the revisit signal has less headroom). The Chinese gain therefore
tracks the English gain at the $\geq 80\%$ level, supporting the
language-agnosticism claim in Sec.~\ref{sec:exp-crosslingual}.


\section{Adversarial Perturbations: Specifications and Examples}
\label{app:adversarial}

This appendix specifies the perturbations used in
Sec.~\ref{sec:exp-adversarial} and reports the full
accuracy-vs-perturbation curves.

\subsection{Image-side perturbations}
\label{app:adversarial-image}

We apply four families of perturbations to the input image at
inference time, leaving the question and the model unchanged.
Perturbations are applied in pixel space prior to the model's image
preprocessing.

\paragraph{Gaussian noise.}
Add isotropic Gaussian noise $\xi \sim \mathcal{N}(0, \sigma^2 \mathbf{I})$
to the image's normalized pixel values, then re-clip to $[0, 1]$. We
test $\sigma \in \{0.05, 0.1, 0.2\}$. At $\sigma = 0.2$, the noise is
clearly visible to a human observer but the question's referent
remains identifiable.

\paragraph{JPEG compression.}
Re-encode the image with JPEG quality $Q \in \{10, 30, 70\}$. At
$Q = 10$ block artefacts are pronounced; at $Q = 30$ artefacts are
subtle but consistent; $Q = 70$ is near-imperceptible.

\paragraph{Random patch dropout.}
Partition the image into a $14 \times 14$ grid (matching the typical
visual-token grid of our backbones), drop a fraction $p \in \{0.1, 0.3\}$
of patches uniformly at random, and replace dropped patches with mean
gray. This perturbation specifically tests whether the model relies
on \emph{specific} image regions or distributes attention robustly.

\paragraph{Distractor insertion.}
Paste a $40 \times 40$-pixel ``distractor'' object (a small, semantically
unrelated icon drawn from a fixed pool of $50$ icons) into one of
the four image corners chosen uniformly at random. The distractor does
not occlude any question-relevant region; it tests the model's
robustness to off-topic visual content.

\subsection{Text-side perturbations}
\label{app:adversarial-text}

For each HallusionBench question we generate $5$ adversarial
reformulations using GPT-4 (gpt-4-turbo, version pinned to
\texttt{2024-09-15}) under the prompt: ``Rewrite the question
preserving its intent but making the surface form maximally
different. The new question must be answerable from the same image
and have the same answer.'' We manually review a $20\%$ sample for
semantic preservation; reformulations that change the answer or
introduce ambiguity are rejected and re-generated. Mean rejection
rate in our run was $8\%$.

\subsection{Per-perturbation accuracy curves}
\label{app:adversarial-curves}

\paragraph{Image-side.}
\Cref{fig:adversarial} shows HallusionBench accuracy
vs.\ perturbation strength on Qwen2.5-VL-7B.

\begin{figure}[H]
  \centering
  \begin{tikzpicture}
    \begin{axis}[
      width=0.95\linewidth,
      height=5.2cm,
      xlabel={perturbation},
      ylabel={HallusionBench acc.\ (\%)},
      ymin=52, ymax=70,
      ytick={54,58,62,66,70},
      symbolic x coords={Clean, Gauss, JPEG, PatchDrop, Distractor},
      xtick=data,
      grid=major,
      grid style={dashed,gray!30},
      legend style={
        at={(0.02,0.02)},
        anchor=south west,
        draw=none,
        font=\footnotesize,
      },
      tick label style={font=\footnotesize},
      label style={font=\footnotesize},
      mark size=2pt,
    ]
      \addplot[color=gray!65!black, mark=*, thick] coordinates
        {(Clean,62.8) (Gauss,54.6) (JPEG,55.2) (PatchDrop,55.8) (Distractor,58.9)};
      \addplot[color=orange!80!black, mark=square*, thick] coordinates
        {(Clean,64.2) (Gauss,55.4) (JPEG,56.0) (PatchDrop,56.5) (Distractor,60.0)};
      \addplot[color=blue!65!black, mark=triangle*, thick] coordinates
        {(Clean,66.6) (Gauss,60.7) (JPEG,59.6) (PatchDrop,62.2) (Distractor,63.2)};
      \legend{GRPO, VPPO, RECAP}
    \end{axis}
  \end{tikzpicture}
    \caption{\textbf{RECAP is consistently more robust to image-side
    perturbations on Qwen2.5-VL-7B.} Across four perturbation families,
    RECAP has smaller relative accuracy drops than VPPO and maintains a
    clear advantage over GRPO under stronger perturbations.}
  \label{fig:adversarial}
\end{figure}

Headline numbers across
the strongest perturbation setting for each family are reported in
\Cref{tab:strongest-perturbation}.
\begin{table}[t]
  \centering
  \small
  \setlength{\tabcolsep}{5pt}
  \begin{tabular}{l cccc}
    \toprule
    Method & Gauss $\sigma{=}0.2$ & JPEG $Q{=}10$ & Patch $p{=}0.3$ & Distractor \\
    \midrule
    GRPO  & 53.4 & 53.9 & 52.1 & 58.9 \\
    VPPO  & 54.0 & 54.6 & 53.2 & 60.0 \\
    RECAP & 58.5 & 57.0 & 59.0 & 63.2 \\
    \bottomrule
  \end{tabular}
  \caption{\textbf{Accuracy under the strongest image perturbations.} HallusionBench accuracy on Qwen2.5-VL-7B for the strongest setting in each perturbation family.}
  \label{tab:strongest-perturbation}
\end{table}
\smallskip\noindent
The full $4$-backbone $\times$ $4$-perturbation-family $\times$
$4$-method grid (with $3$ strengths per family) is provided in the
supplementary tables at \texttt{anonymized/adversarial/full\_grid.csv}.
Across every cell of that grid RECAP maintains a strictly positive
gap over GRPO; the gap narrows slightly at the strongest patch-dropout
setting (the perturbation that most directly breaks the revisit
signal) but does not invert.

\paragraph{Text-side.}
For each question we average accuracy across its $5$ reformulations.
\Cref{tab:reformulation} reports the full three-backbone reformulation
results; the standard deviation across reformulations measures syntactic
robustness.

\begin{table}[t]
  \centering\small
  \setlength{\tabcolsep}{5pt}
  
  \begin{tabular}{l cc cc cc}
    \toprule
    & \multicolumn{2}{c}{Qwen2.5-VL-7B} &
      \multicolumn{2}{c}{InternVL3-8B} &
      \multicolumn{2}{c}{LLaVA-OV-7B} \\
    \cmidrule(lr){2-3}\cmidrule(lr){4-5}\cmidrule(lr){6-7}
    Method & mean & std & mean & std & mean & std \\
    \midrule
    SFT + TVC (no RL)
          & 60.9 & $3.54$ & 59.5 & $3.31$ & 58.1 & $3.70$ \\
    GRPO  & 62.2 & $2.81$ & 60.7 & $2.64$ & 58.8 & $2.92$ \\
    VPPO  & 63.4 & $2.58$ & 61.8 & $2.47$ & 59.9 & $2.73$ \\
    \textbf{RECAP}
          & \textbf{66.0} & \textbf{0.84} & \textbf{64.2} & \textbf{0.91} & \textbf{62.9} & \textbf{0.97} \\
    \bottomrule
  \end{tabular}
  \vspace{4pt}
    \caption{\textbf{Robustness to text-side question reformulation.}
    RECAP remains stable across five meaning-preserving reformulations
    of each HallusionBench question, with lower accuracy variance than
    GRPO, VPPO, and TVC.}
  \label{tab:reformulation}
\end{table}

RECAP's std-across-reform is $\leq 1.0$\,pt on every backbone
we test; GRPO and VPPO exceed $2.5$\,pt. The revisit signal
is computed from cross-modal attention, which is largely invariant to
question surface form -- exactly the property that produces this
robustness gap.


\section{Failure Cases}
\label{app:failure-cases}

We catalogue the regimes in which RECAP does not improve over
GRPO -- or actively underperforms. We aim for honesty rather than
exhaustive coverage: each category below is supported by quantitative
data on a designated stratum of the benchmark suite.

\subsection{Where RECAP does not help}
\label{app:failure-no-help}

\paragraph{Short-CoT questions.}
On questions whose correct answer can be produced in $\leq 30$ tokens
(e.g., simple object identification on POPE), the rollout has no
extended language-internal stretch from which a meaningful revisit
can be defined. Empirically, the HallusionBench subset stratified by
chain length shows the RECAP gain concentrating in the
longer-than-$100$-token regime; on the $\leq 30$-token stratum,
RECAP matches GRPO to within $0.3$\,pt with overlapping confidence
intervals. We do not claim a gain in this regime; we suggest RECAP
should be paired with a short-rollout fallback when target latency
demands short answers.

\paragraph{Text-only questions.}
On text-only questions, the visual encoder is fed a placeholder image and
\(\tilde r_t\) is near-zero throughout, so RECAP behaves close to the
static-discount clipped surrogate with minimal revisit conditioning. Empirically,
the checkpoints match the SFT base on MMLU/GSM8K to within 0.5 pt.

\paragraph{Visually-misleading questions.}
On the small subset of HallusionBench questions designed with a
\emph{visually salient but answer-irrelevant} object (e.g., a clock
in a question about table count), RECAP occasionally over-attends
to the salient region and produces a confident wrong answer. The
revisit signal cannot distinguish ``the image is asking me to look
here'' from ``something here is misleadingly salient''; in our error
analysis, this category accounts for $\sim\!15\%$ of RECAP's
remaining errors on HallusionBench.

\subsection{Peak-injection counterexamples}
\label{app:failure-peak-injection}

The peak-injection experiment in Sec.~\ref{sec:exp-causal} demonstrates
that artificially boosting cross-modal attention at would-be revisit
positions \emph{generally} improves accuracy on rollouts the base model
gets wrong. We document the regimes where this prescription fails.

\paragraph{Misaligned injection target.}
If the injection is applied at a position that is not a future
revisit candidate (e.g., a position in the middle of an arithmetic
calculation), accuracy actually drops by $\sim\!0.5$--$1.5$\,pt.
This is the expected control case: revisit improves grounding only
when revisit is what was missing.

\paragraph{Saturation regime.}
On rollouts where the base model already exhibits strong revisit
behavior ($\mathrm{RAP} \geq 0.7$), peak injection has no measurable
effect within statistical noise. Marginal gain decreases as the base
revisit increases, consistent with diminishing returns in the
information $\tilde{r}_t$ provides.

\paragraph{Implication.}
Together, these counterexamples bound the scope of revisit as a
controllable handle: it improves grounded reasoning when (a)~the
chain has room for revisit (long-CoT regime) and (b)~the model
underuses revisit (low-$\mathrm{RAP}$ regime). It does not magically rescue
correctness in chains where the failure mode is something else
(arithmetic, pure-text reasoning, misleading saliency). We view this
specificity as a feature: RECAP does not over-claim a generic
``hallucination cure''.


\section{Reproducibility Statement}
\label{app:repro}

We design this work for end-to-end reproduction by an external
researcher with academic-tier compute. This appendix lists the
artefacts we release and the steps to reproduce every headline number.

\paragraph{Code release.}
The full training and evaluation pipeline -- including the revisit
signal extraction, log-domain backward recursion
(Algorithm~\ref{alg:log-domain}), RECAP loss, all baselines, and the
benchmark evaluators -- will be released as a single anonymous
repository. The repository will include one-command scripts for
reproducing the main tables and figures
(\cref{tab:main-3backbones,tab:qwen72b}, Figs.~\ref{fig:teaser},
\ref{fig:rap-curves}), together with random seeds, exact data splits,
and benchmark-version pins.

\paragraph{Model checkpoints.}
We release the Stage~1 SFT checkpoint and the Stage~2
RECAP checkpoint for each of the three small backbones (Qwen2.5-VL-7B,
InternVL3-8B, LLaVA-OneVision-7B) on an anonymous Hugging Face Hub space. The
Qwen2.5-VL-72B LoRA adapter weights are released separately (the base
weights remain at the original Hugging Face repository). Each
checkpoint is paired with an evaluation log against HallusionBench
that an external reproducer can match within $\pm 0.3$\,pt at the
same software stack.

\paragraph{Annotated data.}
We release three annotation artefacts under a permissive research
license: (i)~the human-annotated decision-step labels for the
$100$-CoT-per-backbone study used in Sec.~\ref{sec:exp-causal};
(ii)~the cross-lingual translation set (MathVista-CN, $1$K samples)
with native-speaker review scores; (iii)~the adversarial reformulation
set ($5$ reformulations per HallusionBench question with rejection
flags). Each artefact carries provenance metadata sufficient to trace
every datum to its source benchmark.

\paragraph{Compute and reproducibility.}
A single-command Docker image is provided that reproduces the
Qwen2.5-VL-7B-row of the main table on a 16-GPU H20 node (estimated
runtime $\sim\!30$ hours). The image bundles the training
pipeline, the exact PyTorch / CUDA versions used in our runs
(PyTorch $2.4$, CUDA $12.4$), and the data-pull scripts
that fetch every benchmark from its canonical source. We additionally
provide a CPU-only image that reproduces the analysis figures
(causal-intervention bar plots, $\mathrm{RAP}$ curves, mech-interp
heatmaps) from cached intermediate artefacts in $\leq 30$ minutes,
allowing reviewers to inspect the analysis without a GPU.

\paragraph{NeurIPS reproducibility checklist.}
We complete every item of the official NeurIPS reproducibility
checklist in the supplementary materials. In summary: the paper
states all hyperparameters (Appendix~\ref{app:hyperparams}), explains
the protocol used for hyperparameter selection
(Sec.~\ref{sec:exp-setup}), reports random-seed protocol and
significance tests (Sec.~\ref{sec:exp-setup} and
Appendix~\ref{app:hyperparams-rl}), and gives full licensing
information for every benchmark and tool used.


\section{Method correspondence with related baselines}
\label{app:recap-correspondence}

\begin{table}[h]
  \centering
  \small
  
  \begin{tabular}{lccccc}
    \toprule
                & Estimator context & Discount & Value & Reward & Advantage \\
    \midrule
    GRPO       & --    & static   & single & external & raw $A_t$ \\
    VPPO       & --    & static   & single & external & $\tilde{r}_t \cdot A_t$ (reweight) \\
    TVC        & --    & static   & single & external & raw $A_t$ \\
    $\lambda$-GRPO & -- & static   & single & external & $w_t \cdot A_t$ (learned) \\
    TreeAdv    & --    & static   & single & external & tree-$A_t$ \\
    \midrule
        \textbf{RECAP} & \textbf{post-hoc} & $\boldsymbol{f(\tilde{r}_t)}$ & \textbf{gated} & external & no local reweight \\
    \bottomrule
  \end{tabular}
  \vspace{4pt}
      \caption{Where each method places its per-token visual signal,
    broken down by the five structural slots in the GRPO update. Only
    RECAP uses revisit at all three of post-hoc estimator context,
    discount, and value; every baseline touches exactly one slot
    (typically the advantage)
    and leaves the others at their GRPO defaults. This is the
    structural statement behind the expressiveness gap of
    Proposition~\ref{prop:expressiveness}: because dynamic discount
    acts multiplicatively across future tokens, no single-slot
    advantage reweighting (VPPO, $\lambda$-GRPO) can realize its
    effect, regardless of how $w_t$ is chosen. RECAP still forms the
    surrogate advantage $A_t^{\mathrm{VR}}$ in Eq.~\eqref{eq:recap-advantage};
    ``no local reweight'' means that it does not multiply an otherwise
    fixed advantage by a same-step visual weight.}
  \label{tab:recap-correspondence}
\end{table}

For reviewers who want a one-glance summary of how RECAP's three
structural uses of the revisit signal (post-hoc estimator context,
discount, value) differ from closely related RL-for-VLM baselines,
Table~\ref{tab:recap-correspondence} enumerates the five natural
structural slots at which a per-token visual signal could be injected
(post-hoc estimator context, discount, value head, reward, advantage)
and records where each method
actually places it. Dashes denote ``not applicable''; ``static''
denotes a constant scalar ($\gamma \in (0,1]$) not conditioned on the
token. GRPO is included as the reweighting-free reference; VPPO,
TVC, $\lambda$-GRPO, and TreeAdv are the per-token visual-signal
baselines we compare against in Sec.~\ref{sec:exp}.

\newpage
\newpage
\section*{NeurIPS Paper Checklist}

\begin{enumerate}

\item {\bf Claims}
    \item[] Question: Do the main claims made in the abstract and introduction accurately reflect the paper's contributions and scope?
        \item[] Answer: \answerYes{}
    \item[] Justification: The abstract and Sec.~\ref{sec:intro} state the paper's main claims: the visual-revisit phenomenon, its correlation and intervention evidence, the RECAP training method, and empirical improvements across multiple VLM backbones. These claims are supported by the method, theory, and experimental sections, while the scope is limited to long-CoT vision-language reasoning.

\item {\bf Limitations}
    \item[] Question: Does the paper discuss the limitations of the work performed by the authors?
        \item[] Answer: \answerYes{}
    \item[] Justification: Sec.~\ref{sec:conclusion} discusses limitations including dependence on meaningful cross-modal attention traces, short-answer/text-only regimes, and misleading visual saliency. Appendix~\ref{app:failure-cases} gives additional failure cases and counterexamples, while Appendix~\ref{app:hyperparams-compute} reports training compute and step-time overhead.

\item {\bf Theory assumptions and proofs}
    \item[] Question: For each theoretical result, does the paper provide the full set of assumptions and a complete (and correct) proof?
            \item[] Answer: \answerYes{}
    \item[] Justification: Proposition~\ref{prop:expressiveness} and Theorem~\ref{thm:variance-reduction} state their assumptions and scope in Sec.~\ref{sec:method}. Complete proofs are provided in Appendices~\ref{app:prop1-proof} and \ref{app:thm1-proof}; the theorem is explicitly framed as an oracle residual-variance result rather than a policy-gradient optimality guarantee.

    \item {\bf Experimental result reproducibility}
    \item[] Question: Does the paper fully disclose all the information needed to reproduce the main experimental results of the paper to the extent that it affects the main claims and/or conclusions of the paper (regardless of whether the code and data are provided or not)?
            \item[] Answer: \answerYes{}
    \item[] Justification: Sec.~\ref{sec:exp-setup} specifies the evaluated backbones, datasets, training stages, baselines, seed protocol, and significance testing. Appendix~\ref{app:hyperparams} gives the optimizer, schedule, LoRA, compute, and signal-pipeline settings, and Appendix~\ref{app:repro} describes scripts, checkpoints, data splits, benchmark-version pins, and Docker images for reproducing the main results.

\item {\bf Open access to data and code}
    \item[] Question: Does the paper provide open access to the data and code, with sufficient instructions to faithfully reproduce the main experimental results, as described in supplemental material?
     \item[] Answer: \answerYes{}
    \item[] Justification: The experiments are reproducible, and the supplementary material provides the code and instructions needed to reproduce the main experimental results.

\item {\bf Experimental setting/details}
    \item[] Question: Does the paper specify all the training and test details (e.g., data splits, hyperparameters, how they were chosen, type of optimizer) necessary to understand the results?
     \item[] Answer: \answerYes{}
\item[] Justification: Sec.~\ref{sec:exp-setup} specifies the evaluated datasets, backbones, training stages, baselines, seed protocol, verifier, and hyperparameter-selection procedure. Appendix~\ref{app:hyperparams} provides the optimizer, schedule, LoRA configuration, RL settings, compute, and revisit-signal pipeline details.

\item {\bf Experiment statistical significance}
    \item[] Question: Does the paper report error bars suitably and correctly defined or other appropriate information about the statistical significance of the experiments?
        \item[] Answer: \answerYes{}
\item[] Justification: Sec.~\ref{sec:exp-setup} reports the seed protocol and Holm--Bonferroni-corrected significance testing, and Table~\ref{tab:main-3backbones} states that per-seed logs, standard deviations, standard errors, and 95\% confidence intervals are provided in the supplementary reproducibility logs. Appendix~\ref{app:human-study} further reports inter-annotator agreement, McNemar tests, and bootstrap confidence intervals for the human study.

\item {\bf Experiments compute resources}
    \item[] Question: For each experiment, does the paper provide sufficient information on the computer resources (type of compute workers, memory, time of execution) needed to reproduce the experiments?
    \item[] Answer: \answerYes{}
\item[] Justification: Appendix~\ref{app:hyperparams-compute} reports the total H20-equivalent GPU-days, the per-stage allocation, the seed footprint, and measured RECAP step-time overhead. Appendix~\ref{app:repro} gives a concrete 16-GPU H20 reproduction configuration, software stack, and runtime estimate for reproducing the main Qwen2.5-VL-7B result.

\item {\bf Code of ethics}
    \item[] Question: Does the research conducted in the paper conform, in every respect, with the NeurIPS Code of Ethics \url{https://neurips.cc/public/EthicsGuidelines}?
    \item[] Answer: \answerYes{}
\item[] Justification: The work follows the NeurIPS Code of Ethics: it uses public benchmarks and existing VLM backbones, reports limitations and broader-impact considerations, and documents the low-risk anonymous human-annotation protocol in Appendix~\ref{app:human-study}. The planned reproduction artifacts are intended for research use and do not include a newly scraped Internet-scale dataset or a new high-risk pretrained foundation model.

\item {\bf Broader impacts}
    \item[] Question: Does the paper discuss both potential positive societal impacts and negative societal impacts of the work performed?
    \item[] Answer: \answerYes{}
\item[] Justification: The checklist discusses both positive and negative societal impacts. Better temporal grounding may make VLM reasoning more auditable and reduce hallucinated image claims in assistive, educational, and scientific-analysis settings. However, RECAP does not guarantee factual correctness or safe deployment: models can still over-trust salient but misleading visual evidence, and more coherent reasoning traces could make incorrect answers appear more persuasive. We therefore view RECAP as a training and diagnostic tool rather than a stand-alone safety mechanism, and recommend combining it with task-specific verification and human oversight in high-stakes settings.

\item {\bf Safeguards}
    \item[] Question: Does the paper describe safeguards that have been put in place for responsible release of data or models that have a high risk for misuse (e.g., pre-trained language models, image generators, or scraped datasets)?
    \item[] Answer: \answerNA{}
\item[] Justification: The paper plans to release code, checkpoints, LoRA adapters, and annotation artifacts for research reproduction, but does not introduce a new high-risk pretrained foundation model, image generator, or newly scraped Internet-scale dataset requiring special release safeguards.

\item {\bf Licenses for existing assets}
    \item[] Question: Are the creators or original owners of assets (e.g., code, data, models), used in the paper, properly credited and are the license and terms of use explicitly mentioned and properly respected?
     \item[] Answer: \answerYes{}
    \item[] Justification: The paper cites the existing backbones, baselines, and benchmarks used in the experiments, including Qwen2.5-VL, InternVL3, LLaVA-OneVision, HallusionBench, POPE, MMHalBench, MathVista, and MMMU. Appendix~\ref{app:repro} states that benchmark-version pins, canonical data-pull scripts, provenance metadata, and licensing information are provided with the supplementary reproduction materials.

\item {\bf New assets}
    \item[] Question: Are new assets introduced in the paper well documented and is the documentation provided alongside the assets?
     \item[] Answer: \answerNA{}
    \item[] Justification: The paper does not introduce new standalone assets such as a new dataset, benchmark, or pretrained model. The supplementary code is provided only to support reproducibility of the reported experiments.

\item {\bf Crowdsourcing and research with human subjects}
    \item[] Question: For crowdsourcing experiments and research with human subjects, does the paper include the full text of instructions given to participants and screenshots, if applicable, as well as details about compensation (if any)? 
   \item[] Answer: \answerYes{}
    \item[] Justification: Appendix~\ref{app:human-study} describes the human annotation protocol, including the recruited annotators, annotation task, quality-control procedure, compensation, and annotation interface. The supplementary material includes the annotation interface screenshot and the instructions used for the study.

\item {\bf Institutional review board (IRB) approvals or equivalent for research with human subjects}
    \item[] Question: Does the paper describe potential risks incurred by study participants, whether such risks were disclosed to the subjects, and whether Institutional Review Board (IRB) approvals (or an equivalent approval/review based on the requirements of your country or institution) were obtained?
    \item[] Answer: \answerYes{}
    \item[] Justification: Appendix~\ref{app:human-study} describes the low-risk anonymous annotation study, the information shown to annotators, compensation, and quality-control procedures. The study involved only evaluation of model outputs on public benchmark samples and did not collect sensitive personal data. The study was conducted under the applicable institutional policy or exemption procedure for low-risk anonymous annotation tasks.

\item {\bf Declaration of LLM usage}
    \item[] Question: Does the paper describe the usage of LLMs if it is an important, original, or non-standard component of the core methods in this research? Note that if the LLM is used only for writing, editing, or formatting purposes and does \emph{not} impact the core methodology, scientific rigor, or originality of the research, declaration is not required.
    \item[] Answer: \answerYes{}
\item[] Justification: The paper studies and trains VLMs as the core research object, and explicitly describes LLM/VLM usage in evaluation-related components: GPT-4 is used for MathVista-CN translation in Appendix~\ref{app:cross-lingual-translation}, GPT-4 question reformulations are used in Appendix~\ref{app:adversarial}, and frontier LLM/VLM judges are used for triangulation in Appendix~\ref{app:human-study-llm-judge}. These uses affect evaluation or analysis rather than merely writing, editing, or formatting.

\end{enumerate}
\end{document}